\documentclass[10pt]{article} % For LaTeX2e

\usepackage[accepted]{rlj} % Should be uncommented for the camera-ready
\usepackage{amssymb}            % Defines common symbols like \mathbb R
\usepackage{mathtools}          % Extends amsmath, providing common math tools
\usepackage{mathrsfs}           % Enables \mathscr, which can work in cases that \mathcal does not
\mathtoolsset{showonlyrefs}     % Only number equations that are referenced (optional)
\usepackage{graphicx}           % For including images
\usepackage{tikz}
\usepackage[T1]{fontenc}

\usetikzlibrary{
    positioning,
    arrows.meta,
    shadows,
    fadings,
    shapes.misc
}
\definecolor{DeepTeal}{HTML}{00695C} % Dark Muted Teal
\definecolor{LightTeal}{HTML}{4DB6AC} % Lighter Accent Teal
\definecolor{SoftGray}{HTML}{EEEEEE} % Very Light Gray background
\definecolor{DarkText}{HTML}{333333} % Near-black text
\usetikzlibrary{positioning} 
\usepackage{subcaption}         % Allows for the use of subfigures and subcaptions
\usepackage[space]{grffile}     % For spaces in image names
\usepackage{url}                % For displaying URLs
\usepackage{lipsum}             % For placeholder text
\usepackage{amsmath,amssymb,amsthm}

\usepackage{hyperref}
\usepackage{booktabs}       % professional-quality tables
\usepackage{url}
\usepackage{booktabs}       % professional-quality tables
\usepackage{amsfonts}       % blackboard math symbols
\usepackage{nicefrac}       % compact symbols for 1/2, etc.
\usepackage{microtype}      % microtypography
\usepackage{xcolor}         % colors
\usepackage[english]{babel}
\usepackage{enumitem}
\usepackage{amsmath}
\usepackage{wrapfig}
\usepackage{booktabs}
\usepackage{wrapfig}
\usepackage{caption}
\usepackage{amssymb}
\usepackage{siunitx}
\usepackage{cleveref}
\usepackage{algorithm}
\usepackage{algpseudocode}
\usepackage{multirow}
\usepackage{enumitem}
\usepackage{float}
\usepackage{graphicx}
\usepackage{standalone}
\usepackage{pgfplots}
\usepackage{pgfplotstable}
\pgfplotsset{compat=1.18}
\usepgfplotslibrary{fillbetween}
\usepgfplotslibrary{groupplots}
\usepackage{mathtools}
\usetikzlibrary{matrix}
\usepackage{amsfonts}       % blackboard math symbols
\usepackage{nicefrac}       % compact symbols for 1/2, etc.
\usepackage{microtype}      % microtypography
\usepackage{colortbl}%

\newtheorem{theorem}{Theorem}[section]

\newtheorem{proposition}[theorem]{Proposition}

\newtheorem{assumption}[theorem]{Assumption}

\title{Learning in Low-Dimensional Subspaces:\\ Orthogonal Bottlenecks for Reinforcement Learning}

\setrunningtitle{Orthogonal Bottlenecks for Reinforcement Learning}

\author{Aleksandar Todorov\textsuperscript{1,$\star$}, Matthia Sabatelli\textsuperscript{1}}

\emails{a.todorov.4@student.rug.nl,
m.sabatelli@rug.nl}

\affiliations{
$^{1}$\textbf{University of Groningen, Groningen, The Netherlands}\\
\par % If including additional comments like below, use \par to add some whitespace. 
$^\star$ Corresponding author
}

\contribution{
    We provide a theoretical analysis showing that fixed orthogonal bottlenecks preserve expressivity and the induced optimization dynamics under a linear realizability assumption.
    }
    {
    Linear realizability assumptions are widely used in reinforcement learning theory to analyze value-based function approximation \citep{du_is_2020, weisz_online_2023}, but are rarely linked to architectural constraints in deep RL. We show that when the optimal value function is realizable by a linear map in feature space, inserting a fixed orthonormal bottleneck whose dimension meets or exceeds the intrinsic rank does not reduce representational capacity and yields learning dynamics equivalent to an explicit low-dimensional parameterization. This provides a principled justification for constraining learned representations via fixed orthogonal subspaces.
    }

\contribution{
    We empirically demonstrate that deep reinforcement learning representations can be compressed into low-dimensional orthogonal subspaces while preserving performance relative to the unconstrained baseline, and we analyze the resulting representation geometry.
    }
    {
    Across Classic Control, Atari, Brax MuJoCo, and multi-task Meta-World benchmarks, we find that baseline performance is typically recovered once the bottleneck dimension exceeds a small task-dependent threshold. In small domains, we visualize low-dimensional value manifolds; in larger-scale settings, we analyze performance trends and diagnostics such as effective rank. We further compare fixed and trainable projections, identifying regimes in which learning the projection, instead of imposing it, introduces instability and representation collapse.
    }

\keywords{Low-dimensional Representations, Orthogonality, Deep Reinforcement Learning, Manifold Hypothesis} % Your keywords

\summary{Deep reinforcement learning agents typically use high-dimensional neural representations, despite growing evidence that task-relevant value and policy structure may be intrinsically low-dimensional. We study a simple architectural prior that enforces such structure directly by inserting a fixed orthonormal projection between the encoder and downstream heads, constraining representations to operate within a low-dimensional subspace without auxiliary objectives, pretraining, or changes to the underlying RL algorithm. Under a linear realizability assumption, we show that once the bottleneck dimension exceeds the intrinsic rank of the optimal value function in feature space, the bottleneck preserves expressivity and does not alter the induced learning dynamics. Empirically, across single- and multi-task benchmarks, performance is typically recovered once the bottleneck dimension exceeds a small task-dependent threshold. In small domains, we visualize low-dimensional value manifolds, while in larger benchmarks, we observe sharp performance recovery as the bottleneck dimension increases. Diagnostic analyses further show that fixed orthogonal bottlenecks stabilize feature norms and are associated with higher effective rank, while learned projections can be unstable in some regimes. Together, these results suggest that deep reinforcement learning representations can often be faithfully compressed into low-dimensional orthogonal subspaces, and that fixed orthogonal bottlenecks offer a simple mechanism for shaping representation geometry. 
}

\begin{document}

\makeCover  % Create the cover page
\maketitle  % Make the title section

\begin{abstract}

Deep reinforcement learning (RL) agents commonly rely on high-dimensional neural representations, despite growing evidence that task-relevant value and policy structure may be intrinsically low-dimensional. In this work, we present a simple yet effective representation-level prior that inserts a fixed orthonormal projection to constrain encoder features to a low-dimensional subspace, requiring no auxiliary objectives, pretraining, or changes to the underlying RL algorithm. Under a linear realizability assumption, we prove that when the bottleneck dimension exceeds the intrinsic rank of the optimal value function in feature space, the bottleneck preserves expressivity and leaves the induced gradient dynamics unchanged up to an equivalent low-dimensional parameterization. Empirically, we find that across both single and multi-task benchmarks, baseline performance is either matched or improved once the bottleneck dimension exceeds a small task-dependent threshold; in many cases, value representations can be compressed to extremely low dimensions without loss, and the minimal sufficient dimension depends far more on environment complexity than encoder width. In addition, we analyze representation geometry and find that orthogonal bottlenecks stabilize feature norms and are associated with higher effective rank. Together, these results support a representation-space interpretation of the manifold hypothesis in reinforcement learning and position orthogonal bottlenecks as a lightweight, architecture-agnostic mechanism for shaping RL representations.

\end{abstract}

%%%%%%%%%%%%%%%%%%%%%%%%%%%%%%%%%%%%%%%%%%%%%%%%%%%%%%%%%%%%%%%%
%% Section: Submission of papers to RLJ/RLC
%%%%%%%%%%%%%%%%%%%%%%%%%%%%%%%%%%%%%%%%%%%%%%%%%%%%%%%%%%%%%%%%

\begin{figure}[h]
\centering
\begin{tikzpicture}[
    % --- Global Styling ---
    >={Stealth[length=2.5mm, width=1.5mm]}, 
    line width=0.8pt,                       
    node distance=1.2cm, % Tight horizontal spacing for s -> z -> h
    auto
]

% --- Node Styles (Kept small for width control) ---
\tikzset{
    proc_block/.style={
        rectangle,
        rounded corners=6pt, 
        draw=DeepTeal!80,
        fill=SoftGray,
        drop shadow,
        text=DarkText,
        minimum height=2.0em, 
        minimum width=4.0em,
        align=center,
        font=\small\bfseries 
    },
    output_head/.style={
        rectangle,
        rounded corners=4pt,
        draw=DeepTeal,
        fill=LightTeal!50,
        text=DarkText,
        minimum height=1.8em,
        minimum width=3.8em,
        align=center,
        font=\small\bfseries
    },
    arrow_label/.style={
        rectangle,
        fill=LightTeal,
        rounded corners=2pt,
        draw=DeepTeal,
        text=white,
        font=\sffamily\scriptsize\bfseries,
        inner sep=2pt
    }
}

% --- Define the Shared Nodes (Strictly Horizontal) ---
\node[proc_block] (S) {$s$};
\node[proc_block, right=of S] (Z) {$z \in \mathbb{R}^D$};
\node[proc_block, right=of Z] (H) {$h = B^\top z \in \mathbb{R}^k$};

% --- Define the Split Policy and Value Heads (Increased X-Shift for Policy) ---
\def\VertSep{2.5cm} 
% Policy Head (top) - Increased xshift to 1.0cm
\node[output_head, right=of H, yshift=\VertSep/2, xshift=1.0cm] (Policy) {$\pi_\psi(a|h)$};
% Value Head (bottom)
\node[output_head, below=of Policy, yshift=-\VertSep/2] (Value) {$V_\xi(h)$};

% --- Draw the Shared Arrows (Straight Horizontal) ---
\draw[->, DeepTeal, line width=1.2pt] (S) -- node[arrow_label] {$\phi_\theta$} (Z);
\draw[->, DeepTeal, line width=1.2pt] (Z) -- node[arrow_label] {$B^\top$} (H); 

% --- Draw the Split Arrows with Bends and Clear Labels ---
% Arrow to Policy Head: Bending up
\draw[->, DeepTeal, line width=1.2pt] (H.east) to [bend left=25] 
    node[arrow_label, pos=0.5, above] {Policy Head $\psi$} (Policy.west);

% Arrow to Value Head: Bending down
\draw[->, DeepTeal, line width=1.2pt] (H.east) to [bend right=25] 
    node[arrow_label, pos=0.5, below] {Value Head $\xi$} (Value.west);

\end{tikzpicture}
\caption{A simple, visual representation of the orthogonal bottleneck for deep reinforcement learning adapted to a typical Actor-Critic architecture. After encoding the state into features $z \in \mathbb{R}^D$, a fixed orthonormal projection constrains the representation to a $k$-dimensional subspace before feeding the policy and value heads.}
\label{fig:orthogonal_bottleneck}

\end{figure}

% While deep networks offer the flexibility to model complex policies and value functions, this generality comes at a steep price, with modern deep RL agents often requiring millions of environment interactions to learn behaviors that are stable and transferable \citep{li_deep_2018, henderson_deep_2019, badia_agent57_2020, wang_deep_2024}. The problem becomes even more pronounced in multi-task settings, where a single agent must accommodate multiple objectives within a shared representational space \citep{yu_meta-world_2021, yang_multi-task_2020}. 

\newpage
\section{Introduction}\label{sec:introduction}
Reinforcement learning (RL) agents are routinely equipped with highly over-parameterized neural representations, even when solving tasks whose underlying decision structure is comparatively simple. While deep networks offer the flexibility to model complex policies and value functions, it is natural to question to what extent standard deep RL architectures allocate representational capacity far beyond what the task itself demands. This mismatch between network capacity and task complexity aligns naturally with the manifold hypothesis in machine learning, which posits that high-dimensional data and learned representations often concentrate near low-dimensional manifolds embedded in ambient space \citep{goldberg_manifold_2008, bengio_representation_2014,fefferman_testing_2016, meila_manifold_2023}. Recent evidence suggests that such low-dimensional structure also emerges in deep RL. On the policy side, \citet{mutti_reward-free_2022} and \citet{tenedini_parameters_2025} show that the space of behaviorally distinct policies realized by RL agents is effectively low-dimensional, allowing policy networks to be compressed by several orders of magnitude in parameter space without loss of behavioral expressivity. On the representation learning side, both model-free and model-based approaches aim to recover compact latent state manifolds that reflect task dynamics, typically through auxiliary or contrastive objectives that guide an encoder toward structured representations \citep{oord_representation_2019, zhang_learning_2021, echchahed_survey_2025}. These approaches, however, generally treat the manifold as a structure to be either discovered through optimization or to be recovered post hoc via generative modeling. 

In this work, we take a different approach and study an alternative perspective: rather than encouraging the network to uncover a low-dimensional structure, we impose one explicitly through the architecture itself at the representation level. Among the many possible ways to impose a low-dimensional bottleneck on learned representations, our focus is on fixed orthonormal subspaces as they allow for representations that capture the intrinsic subspace of a task without redundancy or distortion. In fact, any linear map onto a $k$-dimensional subspace can be factorized into an orthonormal basis followed by a diagonal scaling; retaining only the orthonormal component yields a full-rank bottleneck in which all directions are treated uniformly and no single eigendirection dominates the representation. Orthogonal projections also enjoy favorable geometric properties, with random orthogonal maps approximately preserving distances with high probability, as formalized by the Johnson-Lindenstrauss lemma \citep{jl-lemma,tipping_probabilistic_1999, ghojogh_factor_2022}, and orthogonal weight matrices are known to improve conditioning and gradient propagation in deep networks through non-expansiveness \citep{saxe_exact_2014, hu_provable_2019}. Motivated by these observations, we study a simple architectural inductive bias, namely, after an encoder produces features $z \in \mathbb{R}^D$, we project them onto a fixed orthonormal basis $B \in \mathbb{R}^{D \times k}$ with $B^\top B = I_k$ and $k \leq D$ via $h = B^\top z$,
and feed only the compressed representation $h \in \mathbb{R}^k$ to all downstream value and/or policy heads. This explicitly constrains the agent to operate within a $k$-dimensional orthogonal representation subspace, without altering the underlying learning algorithm or training objective. \Cref{fig:orthogonal_bottleneck} provides a visual overview of this architecture in a standard actor-critic setting.

Our contributions are as follows. First, we provide theoretical guarantees showing that if the optimal value function is linearly realizable in feature space with intrinsic rank $r$, then a fixed orthogonal bottleneck of dimension $k \geq r$ preserves expressivity and does not alter the induced gradient dynamics of the effective feature-to-representation mapping. Second, we empirically validate this approach across both single-task and multi-task benchmarks and multiple deep RL algorithms. We show that once $k$ exceeds a small task-dependent threshold, low-dimensional orthogonal subspaces typically match (and sometimes improve upon) baseline performance, while yielding more stable and uniformly utilized representations as measured by diagnostics such as feature norms and effective rank.

\section{Related Work}\label{sec:background-related-work}

Our work connects to two broad lines of research. The first studies low-dimensional structure in reinforcement learning, either in policy space or in learned state and value representations, motivated by the manifold hypothesis. The second investigates the role of orthogonality in deep networks as a mechanism for stabilizing signal and gradient propagation. While both bodies of work provide important insights into representation structure and training dynamics, they have largely been explored independently. We combine these perspectives by inserting a fixed orthonormal projection between the encoder and the policy and value heads. In turn, this makes the dimension available to the heads a controlled architectural parameter, allowing us to study its effect without auxiliary objectives, pretraining, or changes to the underlying reinforcement learning algorithm.

\paragraph{Low-dimensional structure and representations.}
A large body of work in reinforcement learning assumes that task-relevant variability is intrinsically low-dimensional and exploits this structure to design compact policy and state representations, often motivated by the manifold hypothesis \citep{narayanan_sample_2010, fefferman_testing_2016}. Unsupervised skill- and option-discovery methods explicitly construct low-dimensional latent spaces of behaviors or tasks using information-theoretic or variational objectives to encourage diversity and structure in the learned policy space \citep{frans_meta_2017, hausman_learning_2018, achiam_variational_2018, eysenbach_diversity_2018, laskin_unsupervised_2022}. More recent work makes a policy-manifold perspective explicit by learning generative models over policy parameters and compressing the policy space into low-dimensional latent codes \citep{rakicevic_policy_2021, mutti_reward-free_2022, tenedini_parameters_2025}. Parallel lines of research in model-based and transfer reinforcement learning learn latent state, action, or policy embeddings to simplify control and enable fast adaptation across related tasks \citep{zhang_solar_2019, arnekvist_vpe_2019, rana_residual_2022}. State (and state-action) representation learning methods similarly aim to recover compact, task-relevant features, typically through contrastive, predictive, or reconstruction-based objectives \citep{oord_representation_2019, zhang_learning_2021, echchahed_survey_2025}. On the theory side, work on linearly realizable value functions and good feature representations formalizes a related notion of low-dimensional linear structure in feature space, primarily from a worst-case sample complexity perspective \citep{du_is_2020, lattimore_learning_2020, weisz_online_2023}. Finally, empirical studies of deep RL representations link geometric properties such as feature rank, isotropy, and neuron dormancy to loss of plasticity and poor long-term learning, highlighting how collapsed or unstable representations impede adaptation \citep{lyle_understanding_2023, sokar_dormant_2023, klein_plasticity_2024, todorov_sparsity-driven_2025}.

\paragraph{Orthogonality and signal propagation.}
Orthogonal and orthonormal weight matrices have long been studied as a means of stabilizing signal and gradient propagation in deep networks \citep{xiao_dynamical_2018, jia_orthogonal_2019, yang_mean_2019, huang_controllable_2020}. In deep linear networks, \citet{saxe_exact_2014} and \citet{hu_provable_2019} show that orthogonal initialization can provably accelerate gradient descent compared to Gaussian initialization, and related techniques combining orthogonality with layerwise variance normalization enable reliable training of very deep convolutional architectures \citep{mishkin_all_2016}. In nonlinear feedforward networks, work on dynamical isometry demonstrates that when the singular values of the input-output Jacobian remain close to one, which is approximately achieved by random orthogonal weights in wide networks, gradients neither vanish nor explode, leading to substantially faster training \citep{pennington_resurrecting_2017, pennington_emergence_2018, xiao_dynamical_2018}. Closely related ideas have also been applied to recurrent architectures, where orthogonal or unitary transition matrices help preserve gradient norms over long sequences and improve learning of long-term dependencies \citep{henaff_recurrent_2016, arjovsky_unitary_2016, chen_dynamical_2018, gilboa_dynamical_2019}.

\section{Orthogonal Bottlenecks Preserve Expressivity and Optimization}
\label{sec:theory}

We start by studying the representational and optimization properties induced by orthogonal bottlenecks from a theoretical perspective. Our goal is to understand whether inserting a fixed $k$-dimensional orthonormal projection between the encoder and downstream heads preserves expressivity, and whether learning with such a projection alters the gradient dynamics compared to directly training a $k$-dimensional representation. To this end, we analyze the value estimation problem underlying a broad class of modern RL algorithms under a linear realizability assumption. This setting allows us to make precise statements about representational sufficiency and optimization behavior, and provides a way of interpreting empirical behavior in terms of low-rank linear structure in the learned feature space. We consider the standard supervised regression formulation of value learning, which serves as an inner loop for many reinforcement learning algorithms. For definitions on reinforcement learning and function approximation, see \Cref{app:theorem_proof}.

We assume access to a sufficiently rich encoder
\begin{equation*}
    \phi : \mathcal{S} \to \mathbb{R}^{D},
\end{equation*}
which maps states $s \in \mathcal{S}$ to a high-dimensional feature space. The target function (e.g., the optimal value function $V^\star$) is assumed to be linear in this feature space. 
\begin{assumption}[Linear realizability]
\label{ass:linear}
There exists a matrix $\Theta^{\star} \in \mathbb{R}^{m \times D}$ such that for all $s \in \mathcal{S}$,
\begin{equation*}
    V^{\star}(s) = \Theta^{\star} \phi(s).
\end{equation*}
\end{assumption}
The linear realizability assumption is standard in reinforcement learning theory \citep{lattimore_learning_2020, du_is_2020, weisz_online_2023} and formalizes the idea that a sufficiently expressive encoder can transform raw observations into a feature space where values are approximately linear. Importantly, this assumption does not require the overall network to be linear, as the encoder $\phi$ may be arbitrarily deep and nonlinear, and the heads following the bottleneck may also be nonlinear. The assumption only concerns the existence of a linear representation of $V^{\star}$ in feature space.

Now, let $z \in \mathbb{R}^D$ denote the encoder output, and let $B \in \mathbb{R}^{D \times k}$ be a matrix with orthonormal columns, i.e., $B^\top B = I_k$ with $k \le D$. The bottleneck representation is defined as $h = B^\top z \in \mathbb{R}^k$, and only $h$ is provided as input to the value and policy heads.

For value learning, we consider a generic differentiable function approximator
\begin{equation*}
    H(h;\theta) \approx V(s),
\end{equation*}

where $\theta$ collects all parameters after the bottleneck, $H: \mathbb{R}^k \to \mathbb{R}^m$ is expressive enough to realize at least a linear layer, as is the case for standard multi-layer perceptron (MLP) heads, and $V$ denotes the value function. A natural question is then: for which values of $k$ does this architecture retain the ability to represent $V^{\star}$, and does the presence of the fixed projection $B$ alter the optimization dynamics relative to directly learning a $k$-dimensional representation? These questions are addressed by the following proposition. The full proof is deferred to \Cref{app:theorem_proof}.

\begin{proposition}
\label{th-main:orthogonal-bottleneck}
Assume $V^{\star}$ is linearly realizable in feature space with rank $r = \mathrm{rank}(\Theta^{\star})$, and let $H(h;\theta)$ be any head expressive enough to realize at least a linear layer. For any $k \ge r$ and orthonormal $B \in \mathbb{R}^{D \times k}$:
\begin{enumerate}[leftmargin=*,itemsep=2pt]
    \item \textbf{(Representational sufficiency).}
    There exist encoder parameters and head parameters $\theta^{\star}$ such that the network
    \[
        s \;\mapsto\; H\bigl(B^\top z(s);\theta^{\star}\bigr)
    \]
    exactly realizes $V^{\star}(s)$ for all $s \in \mathcal{S}$. In particular, once $k \ge r$, inserting a fixed orthogonal bottleneck does not reduce expressivity relative to the given feature space.
    
    \item \textbf{(Trainability):} Let $W \in \mathbb{R}^{D \times D}$ be the encoder's final layer and $A_t = B^\top W_t$ the composite feature-to-bottleneck map. Training $(\theta, W)$ by gradient descent on loss $\mathcal{L}$ evolves $A_t$ identically to training the direct parameterization $h = C\phi(s)$ on $(\theta, C)$, given $C_0 = A_0$.
\end{enumerate}
\end{proposition}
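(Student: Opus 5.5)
For representational sufficiency I would start from a rank factorization of $\Theta^\star$. Since $\mathrm{rank}(\Theta^\star)=r\le k$, write $\Theta^\star = U R$ with $U\in\mathbb{R}^{m\times r}$ and $R\in\mathbb{R}^{r\times D}$ (for instance a thin SVD, $U=U_r\Sigma_r$, $R=V_r^\top$). Pad these to $\tilde U=[U,\,0]\in\mathbb{R}^{m\times k}$ and $\tilde R=\begin{bmatrix}R\\0\end{bmatrix}\in\mathbb{R}^{k\times D}$, so that $\tilde U\tilde R=\Theta^\star$. Next choose the encoder's final layer so that its output is $z(s)=B\tilde R\,\phi(s)$; this is possible because the final layer $W$ acts on $\phi(s)$ and can be set to $W=B\tilde R$. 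Then $B^\top z(s)=B^\top B\tilde R\phi(s)=\tilde R\phi(s)$, using $B^\top B=I_k$. Finally set $\theta^\star$ so that $H$ realizes the linear map $h\mapsto \tilde U h$, which the hypothesis on $H$ allows. This gives $H(B^\top z(s);\theta^\star)=\Theta^\star\phi(s)=V^\star(s)$ for every $s$. The same construction works with any $k\times D$ map whose image is recovered by $B^\top$. The underlying reason is simply that $B$ has a left inverse $B^\top$, so the bottleneck never loses the rank-$r$ information once $W$ is free.

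For trainability I would make the dynamics precise. Both models share the same head, and the loss depends on $W$ only through $A=B^\top W$, so $\mathcal{L}(\theta,W)=\ell(\theta,B^\top W)$ and the direct model has loss $\ell(\theta,C)$. The chain rule gives $\nabla_W\mathcal{L}=B\,\nabla_A\ell(\theta,A)$. A gradient step $W_{t+1}=W_t-\eta B\nabla_A\ell$ therefore yields
\[
A_{t+1}=B^\top W_{t+1}=A_t-\eta\,B^\top B\,\nabla_A\ell(\theta_t,A_t)=A_t-\eta\,\nabla_A\ell(\theta_t,A_t),
\]
which is exactly the update $C_{t+1}=C_t-\eta\nabla_C\ell(\theta_t,C_t)$. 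The gradient with respect to $\theta$ depends only on $(\theta,A)$ and is identical in both models. I would then run an induction on $t$ from $(\theta_0,A_0)=(\theta_0,C_0)$ to conclude that $(\theta_t,A_t)=(\theta_t,C_t)$ for all $t$. The same computation applies verbatim to gradient flow.

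The main obstacle is stating the trainability claim so that it is actually true. The identity requires $W$ to be the only encoder parameter being trained, or at least the only one whose update enters $A$. If earlier encoder layers are also trained, then $\phi$ itself changes and the comparison must share those layers in both models. I would handle this by fixing the earlier layers, or by noting that their gradients are also functions of $(\theta,A)$ alone and thus coincide. I would also note that the component of $W_t$ orthogonal to $\mathrm{range}(B)$, namely $(I-BB^\top)W_t$, is never updated and never affects the loss. This is the precise sense in which the fixed projection leaves the dynamics unchanged.
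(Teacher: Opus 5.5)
Your proposal is correct and follows essentially the same route as the paper. For sufficiency, you take a zero-padded SVD factorization of $\Theta^\star$, realize it through $W^\star = B A^\star$, and use $B^\top B = I_k$. For trainability, you use the chain-rule identity $\nabla_W \mathcal{L} = B\,\nabla_A \widetilde{\mathcal{L}}$, so that $B^\top B$ cancels in the $A_t$ update, and then induct on $t$ with the $\theta$-iterates coinciding as well.

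The differences are cosmetic. You split the singular values as $U_r\Sigma_r$ and $V_r^\top$ rather than the paper's symmetric $\Sigma_r^{1/2}$ split. Your remarks on earlier encoder layers and on the never-updated component $(I-BB^\top)W_t$ are valid clarifications that the paper leaves implicit.
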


\Cref{th-main:orthogonal-bottleneck} establishes that once $k$ matches the intrinsic rank, expressivity is preserved, and orthogonality avoids introducing an additional linear preconditioner into the updates of the feature-to-bottleneck map. In addition, sharing a single orthogonal bottleneck across all value and/or policy heads constrains different learning objectives to operate within the same $k$-dimensional subspace, rather than allowing each head to learn an independent low-rank projection. This encourages a common representation geometry across objectives. 

%Finally, orthogonality ensures that the bottleneck acts as an isometry on its span and does not introduce additional conditioning or gradient pathologies.

\paragraph{Why Orthogonality Matters.} 

\begin{wrapfigure}{r}{0.48\textwidth}
\vspace{-6pt}
\centering
\includegraphics[width=0.48\textwidth]{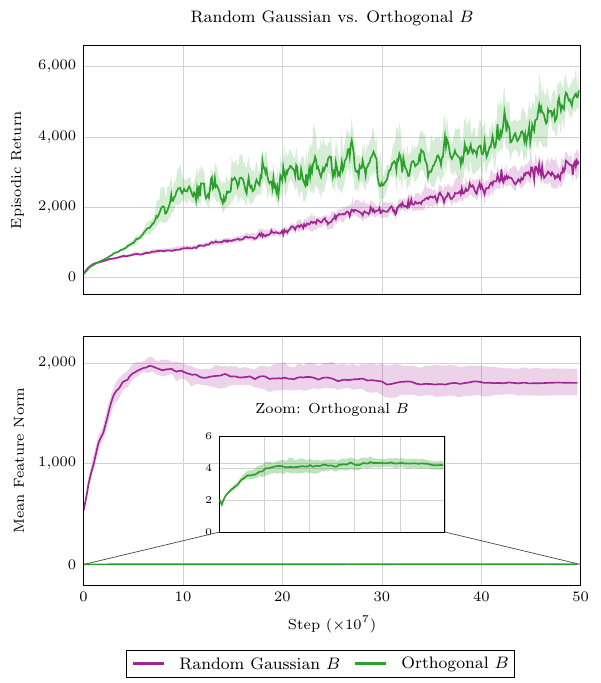}
\vspace{-10pt}
\caption{For PPO in Humanoid, feature norms explode when using a fixed Gaussian projection $B$ and lower performance is achieved, while a fixed orthonormal $B$ learns reliably. Both bottlenecks use $k=8$.}
\label{fig:ortho_vs_gauss}
\vspace{-8pt}
\end{wrapfigure}
A key requirement in \Cref{th-main:orthogonal-bottleneck} is that the projection matrix satisfies the orthogonality condition $B^\top B = I_k$. In particular, it ensures that the induced gradient dynamics on the effective bottleneck map $A_t = B^\top W_t$ are identical to those of a standard $k$-dimensional parameterization. If $B$ is fixed but non-orthogonal, the update for $A_t$ is instead preconditioned by $B^\top B$, which can amplify dominant singular directions and lead to unstable scaling. To illustrate why this matters in practice, \Cref{fig:ortho_vs_gauss} compares a fixed orthonormal projection to a fixed projection sampled from a standard Gaussian distribution of the same bottleneck dimension. While both bottlenecks impose the same dimension $k$, the Gaussian projection exhibits rapidly growing feature norms and degraded performance, whereas the orthonormal projection trains reliably with stable feature scales. This also provides empirical support for \Cref{ass:linear} (linear realizability). If the optimal value function were not well-approximated by a low-rank linear map in the learned feature space, then small bottleneck dimensions would necessarily limit performance. In the following sections, we interpret the recovery threshold in $k$ as an empirical falsification test: across environments, performance is preserved once $k$ exceeds a small task-dependent value, consistent with the presence of approximately low-rank linear structure in learned value representations.

\section{Experimental Setup}\label{sec:experimental-setup}
We evaluate orthogonal projection as a representation prior across a diverse suite of environments and algorithms with the goal of testing how performance depends on the bottleneck dimension $k$, and to empirically probe the theoretical predictions of \Cref{sec:theory} regarding representational sufficiency and the role of orthogonality. We compare a standard baseline agent with no bottleneck to variants that insert a $k$-dimensional projection, varying the bottleneck dimension and whether the projection matrix $B$ is fixed orthogonal or trained end-to-end. In selected experiments, we additionally vary the encoder width $D$ while keeping $k$ fixed to assess the role of encoder capacity once the representation dimension is constrained.

\subsection{Environments and Algorithms}
We consider five families of environments spanning increasing perceptual and control complexity: Classic Control \citep{towers_gymnasium_2025}, MinAtar \citep{young_minatar_2019}, Atari \citep{bellemare_arcade_2013}, Brax MuJoCo \citep{todorov_mujoco_2012, freeman_brax_2021}, and Meta-World MT10 \citep{yu_meta-world_2021}. These domains allow us to study representation compression in low-dimensional state spaces, pixel-based environments, continuous control, and multi-task learning. Across these environments, we use standard model-free deep RL algorithms appropriate to each setting: DQN \citep{mnih_human-level_2015} for Classic Control, PQN \citep{gallici_simplifying_2025} for Atari, and PPO \citep{schulman_proximal_2017} for MinAtar, MuJoCo, and Meta-World.

\subsection{Architecture and Implementation}
All agents share a common architecture of the form as shown in \Cref{fig:orthogonal_bottleneck}, where $\phi_\theta$ is a convolutional encoder for MinAtar and Atari and an MLP encoder for all other environments. The matrix $B \in \mathbb{R}^{D \times k}$ is sampled once by QR decomposition of a Gaussian matrix and held fixed throughout training, with only the encoder and heads updated. This enforces a strict $k$-dimensional information bottleneck without introducing additional trainable parameters. The projection requires $\mathcal{O}(Dk)$ operations for the forward pass and a similar cost for backpropagation, and for typical configurations $(D=256, k=4)$, this is negligible compared to evaluating the encoder or interacting with the environment.

\subsection{Training and Evaluation Protocol}
Each configuration is run with 10 random seeds. For all experiments, we report the interquartile mean (IQM) of returns with 95\% stratified bootstrap confidence intervals, following the recommendations of \citet{agarwal_deep_2021}. To ensure a fair comparison, we use the best hyperparameters found for the unconstrained baseline agent. Complete hyperparameters, training budgets, and diagnostic definitions are provided in \Cref{supp:hyperparameters}.

% Finally, to study a ``useful'' subspace dimension, we examine performance as a function of $k$. For each environment family $\mathcal{E}$ and algorithm $A$, we define the critical dimension
% \begin{equation*}
%     k^* (\mathcal{E}, A) = \min \left \{ k: \frac{\mathcal{J}_\mathrm{proj} (\mathcal{E}, A, k)}{\mathcal{J}_\mathrm{base} (\mathcal{E}, A)} \geq \alpha \right \},
% \end{equation*}
% where $\mathcal{J}_\mathrm{proj}$ and $\mathcal{J}_\mathrm{base}$ are mean final returns and $\alpha$ is a fixed threshold (e.g., 95\% of baseline performance). We then analyze how $k$ changes across the previously mentioned environments and how it scales when we vary the encoder width $D$.

\section{Results}\label{sec:results}
We evaluate orthogonal bottlenecks across environments of increasing complexity, with the goal of understanding how small a representation subspace can be without sacrificing performance, and how orthogonal projections shape the geometry of learned value representations. We begin with small domains where representation structure can be visualized directly, then move to large-scale and multi-task benchmarks to test whether similar compressibility holds at scale.

\subsection{Classic Control and MinAtar: Small Bottlenecks and Value Manifolds}
Across Classic Control tasks, we find that a bottleneck of size $k=2$ is sufficient to recover baseline performance, while $k=1$ leads to clear degradation or failure. Increasing the bottleneck dimension beyond $k=2$ provides no additional benefit. MinAtar exhibits the same qualitative behavior: despite pixel-based observations, baseline performance is matched once $k \in \{1,2,3\}$, indicating that task-relevant value information lies in a very low-dimensional subspace. Full learning curves are provided in \Cref{supp:learning_curves}.
To understand how such small representations suffice, we visualize bottleneck activations collected along evaluation trajectories (greedy for DQN). \Cref{fig:classic_manifold_fixed_2d} shows two-dimensional embeddings for Acrobot-v1 (DQN) and Freeway-MinAtar (PPO) with $k=2$. In both cases, the embeddings concentrate on a thin, low-dimensional manifold rather than filling the ambient space. Coloring by the agents' value estimates reveals a smooth gradient along the manifold, while coloring by action reveals structured, partially overlapping regions on the manifold. Qualitatively, for Acrobot, a single episode trajectory progresses from low-value regions toward higher-value regions in a largely monotone fashion. For Freeway, trajectories are less monotone and repeatedly revisit parts of the manifold as the agent alternates between waiting, positioning, and crossing behaviors. This difference is consistent with the more reactive, cyclic dynamics of Freeway compared to the more goal-directed progression in Acrobot. When the bottleneck dimension is increased to $k=3$, the embeddings remain strongly concentrated near a two-dimensional structure in $\mathbb{R}^3$ (\Cref{fig:classic_manifold_fixed_3d}) with only modest thickness along the third coordinate. Notably, these additional variations are not aligned with the agent's value estimates, suggesting that the additional dimension is largely unused and that the intrinsic value representation remains effectively two-dimensional even when extra capacity is available.

\begin{figure}[htp]
    \centering
    \includegraphics[width=0.8\linewidth]{figures/plot_manifold_grid_compressed.pdf}
    \caption{Bottleneck manifolds for Acrobot-v1 (DQN, top) and Freeway-MinAtar (PPO, bottom) with a fixed orthogonal bottleneck of size $k=2$. Each point is a visited state-action pair encoded into $(h[0],h[1])$, colored by the agent's value estimates (left) or action (middle); in both tasks, representations concentrate on a thin manifold with a smooth value gradient and structured action regions. The right column shows a single evaluation episode colored by timestep: Acrobot trajectories progress largely monotonically toward higher-value regions, while Freeway trajectories repeatedly revisit parts of the manifold reflecting more reactive task dynamics.}
    \label{fig:classic_manifold_fixed_2d}
\end{figure}

\begin{figure}[htp]
    \centering
    \includegraphics[width=0.87\textwidth]{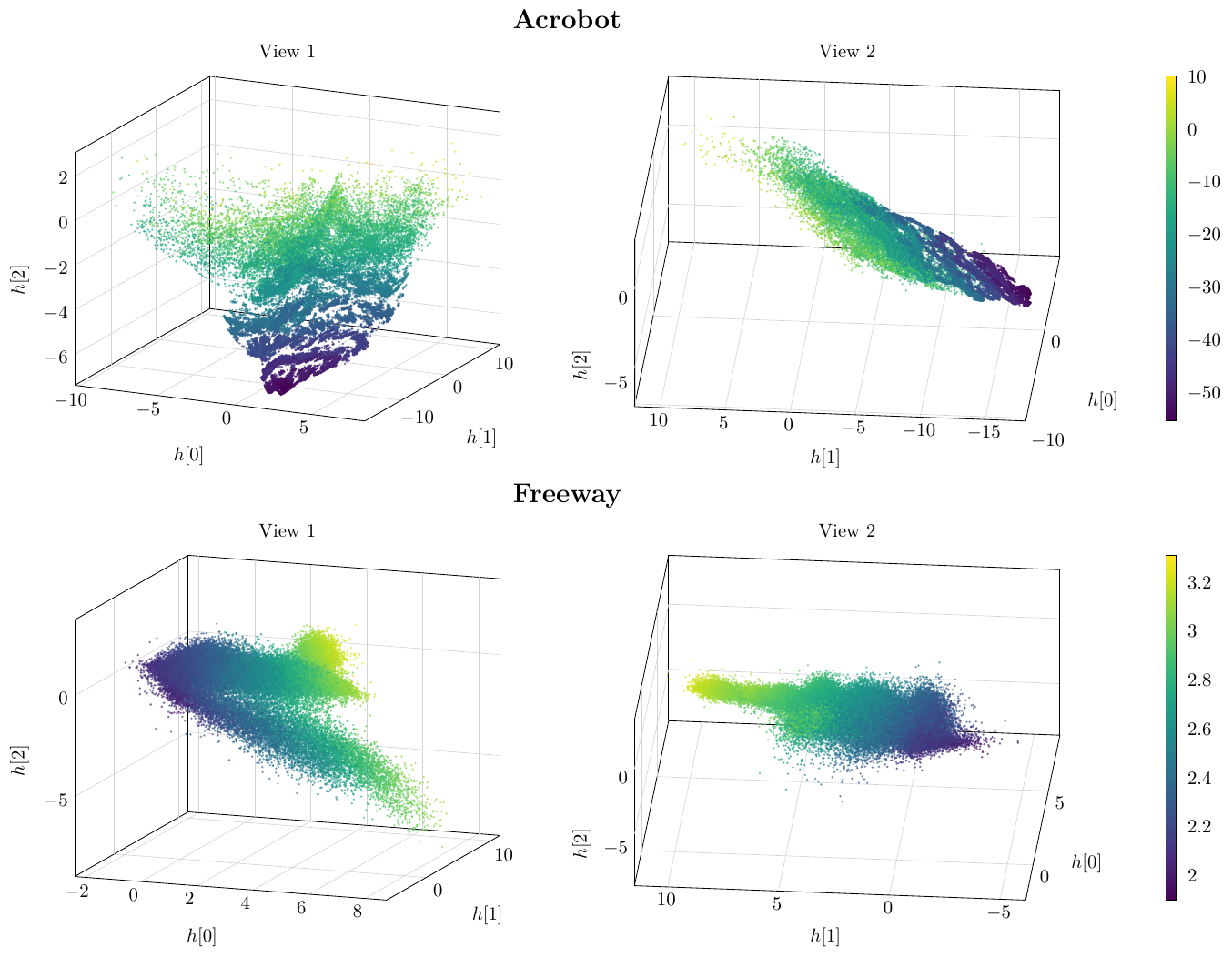}
    \caption{Three-dimensional bottleneck embeddings for Acrobot-v1 (DQN, top) and Freeway-MinAtar (PPO, bottom) with a fixed orthogonal bottleneck of size $k=3$, colored by the agent's value estimates. Each environment is shown from two viewing angles. In both cases, the representations concentrate near a low-dimensional structure in $\mathbb{R}^3$ rather than filling the volume, with slight thickness along the third coordinate. These additional variations are not strongly aligned with the value estimates, consistent with the observation that increasing $k$ beyond the recovery threshold does not change performance.
}
    \label{fig:classic_manifold_fixed_3d}
\end{figure}

\subsection{Large-Scale Benchmarks: Atari and Brax MuJoCo}
We next evaluate on the Atari-5 benchmark (Battle Zone, Double Dunk, Name This Game, Phoenix, and Q*bert) \citep{aitchison_atari-5_2022} and on four MuJoCo tasks (Reacher, Pusher, HalfCheetah, and Humanoid), chosen to span a range of perceptual and dynamical complexity.

\Cref{fig:atari_mujoco} reports final performance as a function of bottleneck dimension $k$. Across all tasks, learning fails when $k$ is too small but reliably recovers once it exceeds a modest, task-dependent threshold. Beyond this point, performance saturates and remains statistically indistinguishable from the no-projection baseline. While the precise value of $k_{\min}$ varies across tasks, reflecting differences in dynamical complexity and control requirements, in all cases, it remains small relative to the encoder width (256 in MuJoCo, 512 in Atari). In some environments, such as Reacher, moderate bottlenecks slightly outperform the baseline, whereas in others, performance saturates near the lower end of the baseline confidence interval. These results mirror the behavior observed in smaller domains: high-dimensional encoder features can be compressed into a low-dimensional orthogonal subspace without loss of performance, provided the bottleneck dimension exceeds the intrinsic rank of the task.

\begin{figure}[thp]
    \centering
    \includegraphics[width=\linewidth]{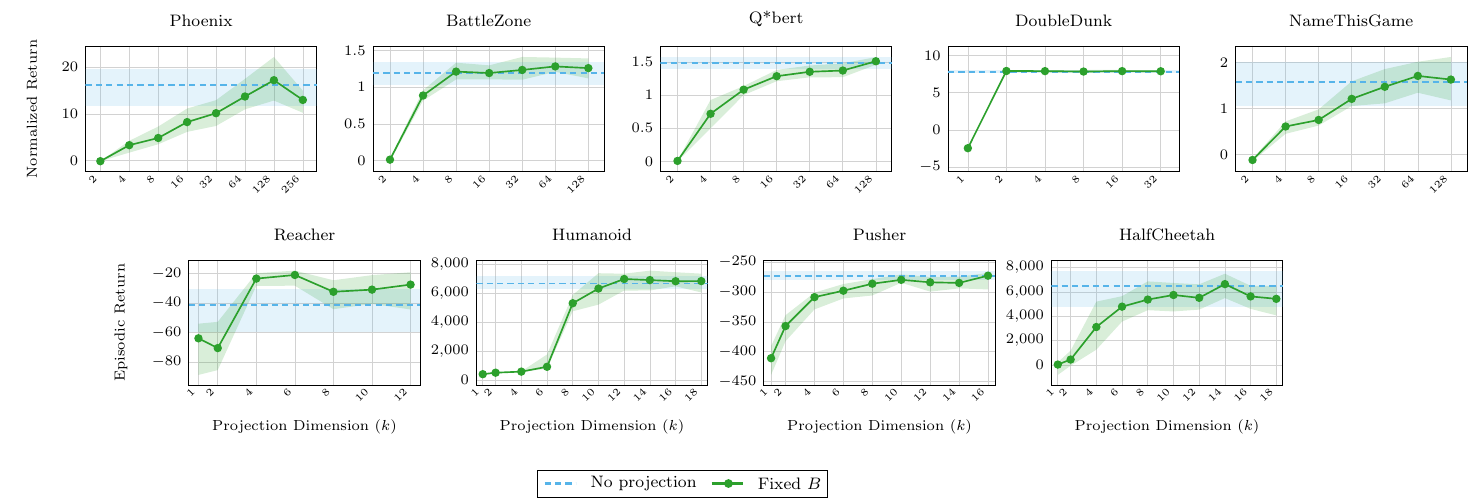}
    \caption{Final performance (IQM over seeds) as a function of bottleneck dimension $k$ for Atari (top) and MuJoCo (bottom) tasks. For each task, performance recovers once $k$ exceeds a small, task-dependent threshold and saturates thereafter, demonstrating that high-dimensional encoder features can be compressed into a very small subspace without loss of performance.}
    \label{fig:atari_mujoco}
\end{figure}

To further disentangle encoder capacity from representation dimensionality, we vary the width of the encoder's final layer on Humanoid while fixing the bottleneck dimension to $k=8$. Across a wide range of encoder widths, learning curves and final performance remain similar, indicating that bottleneck dimension, rather than encoder width, is the primary factor governing expressivity in this setting (see \Cref{fig:encoder_width} in \Cref{supp:encoder_width}).

\subsection{Trainable Projections and Representation Geometry}
We further compare fixed orthogonal projections against fully trainable end-to-end projection matrices. This comparison is intended to isolate the effect of the fixed projection assumed in \Cref{th-main:orthogonal-bottleneck}, assess whether learning the bottleneck subspace provides additional benefits, and examine how this choice interacts with representation geometry. \Cref{fig:trainable_vs_nontrainable} summarizes these results on two representative Atari (Phoenix) and MuJoCo (Humanoid) tasks, shown for both small and large bottleneck dimensions.

For Humanoid, allowing the projection to be trainable yields slightly higher returns at small bottleneck dimensions, but this advantage disappears as the bottleneck dimension increases. At larger $k$, fixed and trainable projections achieve similar performance. In contrast, for Phoenix, trainable projections exhibit unstable behavior at larger bottleneck dimensions and can collapse performance entirely, whereas fixed orthogonal projections remain reliable across both small and large $k$. Overall, this indicates that making the projection trainable might introduce additional sensitivity to the task and bottleneck size, while fixed orthogonal projections exhibit more consistent behavior across settings.

\begin{figure}
    \centering
    \includegraphics[width=1\linewidth]{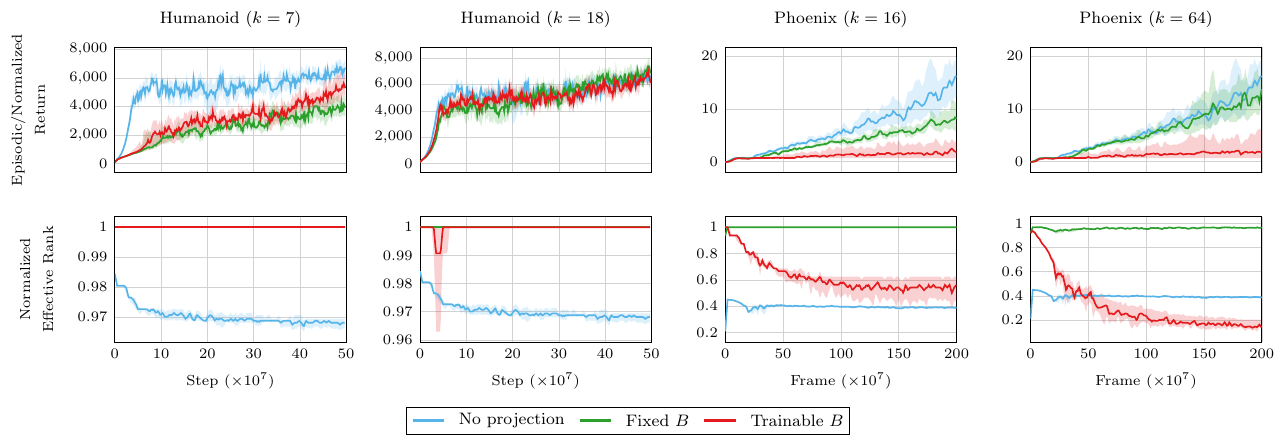}
    \caption{Performance (top row) and normalized mean effective rank (bottom row) for no projection, fixed orthogonal projection, and trainable projection on Humanoid and Phoenix at small and large bottleneck dimensions. Trainable projections can be beneficial or unstable depending on the environment, with performance degradation coinciding with severe rank collapse. Fixed orthogonal projections remain stable across tasks and bottleneck sizes.}
    \label{fig:trainable_vs_nontrainable}
\end{figure}

The corresponding effective-rank diagnostics in \Cref{fig:trainable_vs_nontrainable} help contextualize these performance differences. Effective rank measures how many dimensions of the representation are actively used, and prior work has linked rank collapse to loss of plasticity and degraded learning dynamics in deep reinforcement learning \citep{kumar_implicit_2021,lyle_understanding_2022,klein_plasticity_2024}. Fixed orthogonal projections consistently maintain high effective rank relative to their dimensionality, indicating uniform usage of the available subspace. In contrast, trainable projections and unconstrained baselines exhibit more variable rank dynamics, with effective rank often substantially lower than the ambient feature dimension and a higher prevalence of weakly used or inactive directions.

Importantly, low effective rank does not, by itself, imply learning failure: the baseline agents learn successfully despite exhibiting comparatively low effective rank. However, when learning becomes unstable or collapses, this failure is typically accompanied by a pronounced drop in the effective rank. This relationship is most clearly visible for Phoenix at large bottleneck dimensions, where the collapse of trainable projections coincides with a sharp reduction in effective rank, while fixed orthogonal projections at the same dimensionality maintain both stable learning and high rank.

\subsection{Multi-task Meta-World}
\begin{wrapfigure}{r}{0.48\textwidth}
\vspace{-12pt}
\centering
\includegraphics[width=\linewidth]{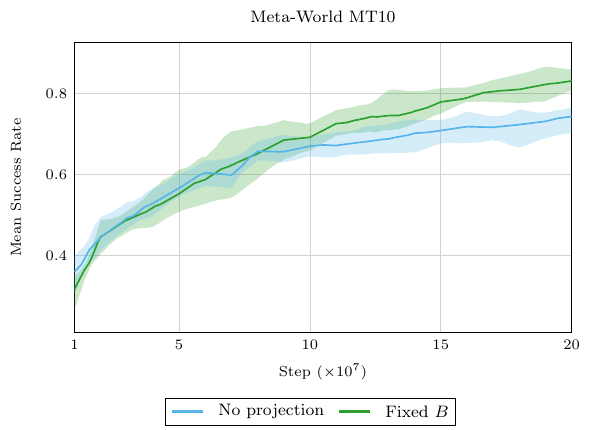}
\vspace{-10pt}
\caption{Meta-World MT10 performance of a baseline PPO agent and an agent equipped with a fixed bottleneck of dimension $k=24$.}
\label{fig:metaworld}
\vspace{-8pt}
\end{wrapfigure}

We finally turn to the multi-task setting, where a single agent must solve multiple tasks simultaneously using shared parameters. Multi-task RL is often limited by negative transfer and representational interference, since tasks compete for shared capacity and updates from one task can degrade performance on others, and such settings are known to be particularly sensitive to how representational capacity is allocated across tasks \citep{yu_gradient_2020, yang_multi-task_2020}. This issue has motivated a variety of methods in multi-task and continual learning that explicitly modify the geometry of the learning process, including PCGrad, which projects conflicting gradients \citep{yu_gradient_2020}, MOORE, which uses orthogonal expert representations to promote diversity and reduce interference \citep{hendawy_multi-task_2024}, and low-rank orthogonal subspaces for reducing task interference in continual learning \citep{chaudhry_continual_2020}.

\Cref{fig:metaworld} compares a standard PPO baseline against an agent equipped with a fixed orthogonal bottleneck of dimension $k=24$ on Meta-World MT10. With a suitably chosen bottleneck dimension, the orthogonal bottleneck modestly improves performance relative to the no-projection baseline. This result indicates that constraining the agent to operate within a shared low-dimensional subspace does not hinder multi-task learning and can, in fact, even be beneficial, suggesting that appropriately structured low-dimensional representations can be compatible with effective parameter sharing across tasks. We view this improvement over the unconstrained baseline as a secondary regularization effect, in line with a large body of prior work that has shown that structured constraints, such as sparsity, can improve learning by reducing interference or preserving plasticity \citep{graesser_state_2022, obando-ceron_value-based_2024, todorov_sparsity-driven_2025}. These findings are also consistent with our broader finding that orthogonal bottlenecks encourage uniformly utilized shared representations, which may help mitigate degenerate representation collapse in settings with competing objectives.

\section{Discussion and Conclusion}
Our results provide empirical evidence that many reinforcement learning tasks admit low intrinsic representation dimensions, even when solved with highly overparameterized neural architectures. Across our benchmarks, the minimal sufficient bottleneck dimension varies substantially across environments, but performance is comparatively insensitive to encoder width in our Humanoid sweep at a fixed $k$. This pattern is consistent with recent conjectures that intrinsic manifold dimensionality is driven primarily by environment complexity rather than network size, though \citet{tenedini_parameters_2025} formulate this for policy manifolds rather than value representations. In small domains, such structures can be visualized explicitly as thin value manifolds embedded in an ambient feature space. While these manifolds exhibit smooth and coherent geometry, they remain largely uninterpretable: the learned axes do not correspond to semantically meaningful factors of the environment. An interesting direction for future work is to connect these findings to object-centric reinforcement learning, where representations are explicitly structured around entities, relations, and compositional structure \citep{greff_multi-object_2019, locatello_object-centric_2020, haramati_entity-centric_2023}. Such approaches may offer more interpretable low-dimensional representations than those learned implicitly by standard encoders. Studying how structured bottlenecks interact with object-centric architectures could help determine whether the low-dimensional manifolds observed here can be aligned with meaningful latent factors rather than remaining purely geometric.

From a theoretical perspective, the empirical success of small orthogonal bottlenecks provides indirect evidence that the linear realizability assumption commonly used in RL theory can be made to hold approximately by modern neural encoders. While this assumption cannot be verified directly, and alternative explanations cannot be ruled out, the fact that performance is preserved once the bottleneck dimension exceeds a small task-dependent threshold is consistent with the idea that neural networks can learn feature spaces in which the value or action-value function is well-approximated by a low-rank linear map. If such a structure were absent, the representational sufficiency guarantees implied by our analysis would fail, and small bottlenecks would necessarily degrade performance. That this does not occur across a wide range of tasks and algorithms suggests that deep RL representations may be more amenable to low-rank and linear analysis than is commonly assumed.

Overall, we show that the representations learned by modern deep RL agents can often be compressed into surprisingly low-dimensional subspaces without loss of performance. Orthogonal bottlenecks provide a lightweight and architecture-agnostic mechanism for exposing and exploiting this low-dimensional structure, offering a step toward more principled and geometry-aware representation design in reinforcement learning.

\section*{Acknowledgements}
The authors thank the Center for Information Technology of the University of Groningen for their support and for providing access to the Hábrók high-performance computing cluster. We also greatly appreciate everyone who was involved in providing feedback for the final version of the manuscript. Aleksandar is highly grateful for the financial support provided by the Department of Artificial Intelligence at the University of Groningen and his supervisor, Matthia Sabatelli, who allowed the execution and presentation of the current work to happen.

\newpage

%%%%%%%%%%%%%%%%%%%%%%%%%%%%%%%%%%%%%%%%%%%%%%%%%%%%%%%%%%%%%%%%
%% NOTE: THIS MARKS THE END OF THE "MAIN TEXT"
%%%%%%%%%%%%%%%%%%%%%%%%%%%%%%%%%%%%%%%%%%%%%%%%%%%%%%%%%%%%%%%%

\bibliography{main,references}
\bibliographystyle{rlj}

\beginSupplementaryMaterials
\appendix

\section{Implementation and Hyperparameters}\label{supp:hyperparameters}

In this appendix, we report the full hyperparameter configurations used in all experiments. Tables are grouped by benchmark family: Classic Control (CartPole-v1, Acrobot-v1), MinAtar, Atari, MuJoCo, and Meta-World. Unless otherwise stated, hyperparameters are shared across tasks within the same family and algorithm, and differences are explicitly noted in the table captions. The projection dimensions are chosen to be the smallest $k$ that matches baseline performance for each benchmark family. Moreover, as stated in the main text, we report the interquartile mean (IQM) of returns with 95\% stratified bootstrap confidence intervals, following the recommendations of \citet{agarwal_deep_2021}.

For Classic Control, we use the \texttt{purejaxrl} DQN implementation by \citet{lu_discovered_2022}. Similarly, for Atari, we use the \texttt{purejaxql} PQN implementation by \citet{gallici_simplifying_2025}. For MinAtar \citep{young_minatar_2019} and Brax MuJoCo \citep{freeman_brax_2021} we use PPO, and for Meta-World, we use the multi-task PPO implementation from \citet{mclean_meta-world_2025}. Hyperparameters were chosen slightly differently for each environment:
\begin{itemize}
    \item DQN in CartPole: the original implementation provided by \texttt{purejaxrl} provides an already-tuned algorithm.
    \item PPO in MinAtar: we tune the hyperparameters so that they reach (or exceed) the baseline PPO performance as in \citet{young_minatar_2019} and \citet{gallici_simplifying_2025}.
    \item PQN in Atari: the implementation by \citet{gallici_simplifying_2025} provides an already-tuned implementation. We follow the original PQN theory and architecture and apply LayerNorm after every layer. Accordingly, when inserting the orthogonal bottleneck, we apply LayerNorm after the projection to maintain consistent feature scaling.
    \item PPO in MuJoCo Brax: given that the network architecture with a shared encoder is non-standard for MuJoCo, we tune the algorithm until it reaches (or exceeds) the baseline Brax PPO performance as reported in \citet{freeman_brax_2021}.
    \item PPO in Meta-World: we use the standard Meta-World PPO hyperparameters, specified in \citet{mclean_meta-world_2025}. We found that the baseline PPO performance degrades significantly otherwise.
\end{itemize}

While many definitions for the effective rank exist in the literature, we consider the one commonly used in RL \citep{kumar_implicit_2021,obando-ceron_value-based_2024,todorov_sparsity-driven_2025}. Namely, given a batch of feature vectors collected during rollouts, we form a feature matrix $X \in \mathbb{R}^{N \times d}$ where each row is one feature vector. We center features across the batch,
$\tilde X = X - \mu$, where $\mu$ denotes the batch mean activation. Let $\sigma_1 \ge \cdots \ge \sigma_l$ be the singular values of $\tilde X$ (with $l=\min(N,d)$). We define normalized singular values
\[
p_i = \frac{\sigma_i}{\sum_{j=1}^l \sigma_j},
\]
and the mean effective rank as the smallest $k_{\mathrm{eff}}$ such that the cumulative mass exceeds $1-\delta$:
\[
k_{\mathrm{eff}} = \min\Bigl\{k \in \{1,\dots,l\} : \sum_{i=1}^k p_i \ge 1-\delta \Bigr\}.
\]
We use the standard value $\delta=0.01$ for all experiments and report the normalized effective rank $
k_{\mathrm{norm}} = \frac{k_{\mathrm{eff}}}{k}$, so that $k_{\mathrm{norm}} \in[0,1]$ measures the fraction of available bottleneck dimensions that are effectively used. In the absence of a bottleneck, $k$ corresponds to the layer width. The full code is available at \href{https://github.com/atodorov284/ortho-bottlenecks}{https://github.com/atodorov284/ortho-bottlenecks}.

\begin{table}[htbp]
  \centering
  \small
  \caption{Classic Control (CartPole-v1, Acrobot-v1) DQN hyperparameters. The same configuration is used for both environments.}
  \label{tab:hparams_classiccontrol_dqn}
  \resizebox{0.4\textwidth}{!}{
  \begin{tabular}{l l}
    \toprule
    \textbf{Hyperparameter} & \textbf{Value} \\
    \midrule
    Total timesteps & $5\times 10^{5}$ \\
    Training environments & 10 \\
    Discount $\gamma$ & 0.99 \\
    \midrule
    Learning rate & $2.5\times 10^{-4}$ \\
    Linear LR decay & False \\
    Replay buffer size & 10{,}000 \\
    Batch size & 128 \\
    Learning starts & 10{,}000 \\
    Training interval & 10 \\
    Target update interval & 500 \\
    Polyak $\tau$ & 1.0 \\
    \midrule
    $\varepsilon$-greedy start & 1.0 \\
    $\varepsilon$-greedy final & 0.05 \\
    $\varepsilon$ anneal timesteps & $2.5\times 10^{5}$ \\
    \midrule
    Encoder type & Linear \\
    Encoder feature dim ($D$) & 128 \\
    Encoder last dim & 128 \\
    Q-head hidden layers & 0 \\
    Q-head feature dim & 64 \\
    Activation & ReLU \\
    \midrule
    Testing environments & 128 \\
    Test rollout horizon & Episode length \\
    Number of metric evaluations & 100 \\
    Test $\varepsilon$ & 0.0 \\
    \bottomrule
  \end{tabular}}
\end{table}

\begin{table}[htbp]
\centering
\small
\caption{MinAtar PPO hyperparameters and evaluation settings.}
\resizebox{0.4\textwidth}{!}{
\begin{tabular}{@{}ll@{}}
\toprule
\textbf{Hyperparameter} & \textbf{Value} \\
\midrule
Environment & Breakout-MinAtar \\
Total timesteps & $1\times 10^{7}$ \\
Training environments & 64 \\
Rollout length (steps) & 128 \\
Update epochs & 4 \\
Minibatches per epoch & 8 \\
Discount factor $\gamma$ & 0.99 \\
GAE $\lambda$ & 0.95 \\
PPO clip $\epsilon$ & 0.2 \\
Entropy coefficient & 0.01 \\
Value coefficient & 0.5 \\
Max grad norm & 0.5 \\
Learning rate & $5\times 10^{-3}$ \\
Learning rate annealing & True \\
Activation & ReLU \\
Seed & 0 \\
\midrule
Encoder type & CNN \\
Encoder feature dim & 256 \\
Encoder last dim & 256 \\
Actor feature dim & 64 \\
Critic feature dim & 64 \\
Actor hidden layers & 0 \\
Critic hidden layers & 0 \\
\midrule
Testing environments & 128 \\
Test rollout horizon & Episode length \\
Number of metric evaluations & 100 \\
\bottomrule
\end{tabular}}
\label{tab:minatar_ppo_hparams}
\end{table}

\newpage

\begin{table}[htbp]
  \centering
  \small
  \caption{Atari (Battle Zone, Double Dunk, Name This Game, Phoenix, Q*bert) PQN hyperparameters and environment settings. Total timesteps correspond to $200$M frames with frame skip $4$.}
  \label{tab:hparams_atari_pqn}
  \begin{tabular}{l l}
    \toprule
    \textbf{Hyperparameter} & \textbf{Value} \\
    \midrule
    Total timesteps & $5\times 10^{7}$ \\
    Training environments & 128 \\
    Steps per env per update & 32 \\
    Epochs per update & 2 \\
    Minibatches per epoch & 32 \\
    Discount $\gamma$ & 0.99 \\
    $\lambda$ (trace / advantage) & 0.65 \\
    \midrule
    Learning rate & $2.5\times 10^{-4}$ \\
    Linear LR decay & False \\
    Max grad norm & 10 \\
    Normalization & LayerNorm \\
    Encoder last dim & 512 \\
    \midrule
    $\varepsilon$ start & 1.0 \\
    $\varepsilon$ final & 0.001 \\
    $\varepsilon$ decay ratio & 0.1 \\
    \midrule
    Episodic life & True \\
    Reward clip & True \\
    Sticky actions prob. & 0.0 \\
    Frame skip & 4 \\
    No-op max & 30 \\
    \midrule
    Testing environments & 8 \\
    Test $\varepsilon$ & 0.0 \\
    \bottomrule
  \end{tabular}
\end{table}

\newpage

\begin{table}[htbp]
  \centering
  \small
  \caption{Brax MuJoCo (PPO) hyperparameters for Reacher, Pusher, HalfCheetah, and Humanoid.}
  \label{tab:hparams_brax_ppo}
\resizebox{0.65\textwidth}{!}{
  \begin{tabular}{@{}lcccc@{}}
    \toprule
    \textbf{Hyperparameter}
      & \textbf{Reacher}
      & \textbf{Pusher}
      & \textbf{HalfCheetah}
      & \textbf{Humanoid} \\
    \midrule
    Total timesteps
      & $5\times10^{7}$
      & $5\times10^{7}$
      & $5\times10^{7}$
      & $5\times10^{7}$ \\
    Learning rate
      & $3\times10^{-4}$
      & $3\times10^{-4}$
      & $3\times10^{-4}$
      & $3\times10^{-4}$ \\
    Learning rate annealing
      & True & True & True & True \\
    Parallel environments
      & 2048 & 2048 & 2048 & 2048 \\
    Unroll length (steps)
      & 50 & 30 & 20 & 10 \\
    Update epochs
      & 8 & 8 & 8 & 8 \\
    Num.\ minibatches
      & 32 & 16 & 32 & 32 \\
    Discount $\gamma$
      & 0.95 & 0.95 & 0.95 & 0.97 \\
    GAE $\lambda$
      & 0.95 & 0.95 & 0.95 & 0.95 \\
    PPO clip $\epsilon$
      & 0.3 & 0.3 & 0.3 & 0.3 \\
    Entropy coefficient
      & $1\times10^{-3}$
      & $1\times10^{-2}$
      & $1\times10^{-3}$
      & $1\times10^{-3}$ \\
    Value loss coefficient
      & 0.5 & 0.5 & 0.5 & 0.5 \\
    Max grad norm
      & 0.5 & 0.5 & 0.5 & 0.5 \\
    Activation
      & Tanh & Tanh & Tanh & Tanh \\
    \midrule
    Action repeat
      & 4 & 1 & 1 & 1 \\
    Episode length
      & 1000 & 1000 & 1000 & 1000 \\
    Reward scaling
      & 5.0 & 5.0 & 1.0 & 0.1 \\
    Normalize observations
      & True & True & True & True \\
    \midrule
    Encoder type
      & Linear & Linear & Linear & Linear \\
    Encoder feature dim
      & 256 & 256 & 256 & 256 \\
    Encoder last dim
      & 256 & 256 & 256 & 256 \\
    Actor feature dim
      & 128 & 128 & 128 & 128 \\
    Num.\ actor layers
      & 2 & 2 & 2 & 2 \\
    Critic feature dim
      & 128 & 128 & 128 & 128 \\
    Num.\ critic layers
      & 2 & 2 & 2 & 2 \\
    Log-std init
      & 0.0 & 0.0 & 0.0 & 0.0 \\
    \bottomrule
  \end{tabular}}
\end{table}

\begin{table}[htbp]
\centering
\small
\caption{Meta-World MT10 training and architecture hyperparameters. The baseline agent uses the same hyperparameters but without a bottleneck.}
\resizebox{0.45\textwidth}{!}{
\begin{tabular}{@{}ll@{}}
\toprule
\textbf{Hyperparameter} & \textbf{Value} \\
\midrule
Terminate on success & False \\
Total environment steps & $2\times 10^7$ \\
Rollout steps per epoch & 10{,}000 \\
Evaluation frequency & $2000$ steps \\
Discount factor $\gamma$ & 0.99 \\
GAE $\lambda$ & 0.97 \\
Number of epochs & 16 \\
Gradient steps per epoch & 32 \\
Normalize advantages & False \\
Baseline type & MLP \\
\midrule
Policy network & $[400,400,400]$ \\
Value network & $[400,400,400]$ \\
Architecture & Vanilla MLP \\
Activation & \texttt{tanh} \\
Policy squashing & \texttt{squash\_tanh=False} \\
Optimizer & Adam \\
Learning rate & $3\times 10^{-4}$ \\
\midrule
Projector network width & 256 \\
Projector network depth & 3 \\
Bottleneck dimension & 24 \\
\bottomrule
\end{tabular}}
\label{tab:metaworld_hparams}
\end{table}

\newpage

\section{Expressivity Analysis}\label{app:theorem_proof}
This appendix section provides a self-contained proof of \Cref{th-main:orthogonal-bottleneck}, which formalizes two basic properties of fixed orthogonal bottlenecks under a linear realizability assumption.

We consider a discounted Markov decision process (MDP) $M=(\mathcal{S},\mathcal{A},\mathcal{P},\mathcal{R},\gamma)$ with state space $\mathcal{S}$, action space $\mathcal{A}$, transition kernel $\mathcal{P}(\cdot \mid s,a)$, reward function $\mathcal{R}(s,a)$, and discount factor $\gamma\in[0,1)$. A policy $\pi$ induces trajectories $(s_n,a_n,r_{n+1})_{n\ge0}$ with $a_n\sim\pi(\cdot\mid s_n)$, $s_{n+1}\sim\mathcal{P}(\cdot\mid s_n,a_n)$, and $r_{n+1}=\mathcal{R}(s_n,a_n)$. The objective is to maximize the expected discounted return $J(\pi)=\mathbb{E}_\pi[\sum_{n\ge0}\gamma^n r_{n+1}]$, which is commonly approached by estimating value functions. The state-value and action-value functions are
\[
V^\pi(s)=\mathbb{E}_\pi\!\left[\sum_{n\ge0}\gamma^n r_{n+1}\,\middle|\, s_0=s\right],
\qquad
V^\star(s)=\sup_\pi V^\pi(s),
\]
\[
Q^\pi(s,a)=\mathbb{E}_\pi\!\left[\sum_{n\ge0}\gamma^n r_{n+1}\,\middle|\, s_0=s,\ a_0=a\right],
\qquad
Q^\star(s,a)=\sup_\pi Q^\pi(s,a).
\]

Under linear realizability, we first show that if $V^\star$ can be represented in a learned feature space by a matrix $\Theta^\star$ of rank $r$, then any orthogonal bottleneck dimension $k\ge r$ is representationally sufficient. In particular, inserting a fixed orthonormal projector $B^\top$ after the encoder does not reduce expressivity relative to the given features. We then prove a trainability equivalence: if $W_t \in \mathbb{R}^{D \times D}$ denotes the encoder's final layer parameters at iteration $t$, gradient descent on the projected parameterization induces the same gradient dynamics on the composite map $A_t=B^\top W_t$ as gradient descent on an explicit $k$-dimensional map $C_t$ when initialized identically. This reduction allows standard results for deep linear networks and matrix factorization to be applied to the projected architecture. For completeness, we restate \Cref{ass:linear} and \Cref{th-main:orthogonal-bottleneck}. We state the analysis in terms of $V^\star$, but the same definitions and arguments apply to $Q^\star$ by replacing $s$ with $(s,a)$ and using action-dependent features.

\begin{assumption}[Linear realizability]
There exists a matrix $\Theta^{\star} \in \mathbb{R}^{m \times D}$ such that for all $s \in \mathcal{S}$,
\[
    V^{\star}(s) = \Theta^{\star}\phi(s).
\]
\end{assumption}

\begin{proposition}
Assume $V^{\star}$ is linearly realizable in feature space with rank $r = \mathrm{rank}(\Theta^{\star})$, and let $H(h;\theta)$ be any head expressive enough to realize at least a linear layer. For any $k \ge r$ and orthonormal $B \in \mathbb{R}^{D \times k}$:
\begin{enumerate}[leftmargin=*,itemsep=2pt]
    \item \textbf{Representational sufficiency.}
    There exist encoder parameters and head parameters $\theta^{\star}$ such that the network
    \[
        s \;\mapsto\; H\bigl(B^\top z(s);\theta^{\star}\bigr)
    \]
    exactly realizes $V^{\star}(s)$ for all $s \in \mathcal{S}$.

    \item \textbf{Trainability:}
    Let $W \in \mathbb{R}^{D \times D}$ be the encoder's final layer and $A_t = B^\top W_t$ the composite feature-to-bottleneck map. Training $(\theta, W)$ by gradient descent on loss $\mathcal{L}$ evolves $A_t$ identically to training the direct parameterization $h = C\phi(s)$ on $(\theta, C)$, given $C_0 = A_0$.
\end{enumerate}
\end{proposition}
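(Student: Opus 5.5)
For representational sufficiency, I will argue by an explicit linear-algebraic construction. Write $z(s) = W\phi(s)$, where $W \in \mathbb{R}^{D\times D}$ is the encoder's final layer acting on the features $\phi(s)$ of \Cref{ass:linear}. Since $\mathrm{rank}(\Theta^\star) = r$, I factor $\Theta^\star = U R$ with $U \in \mathbb{R}^{m\times r}$ and $R \in \mathbb{R}^{r\times D}$, for instance from a compact SVD $\Theta^\star = U_r \Sigma_r V_r^\top$ by taking $U = U_r\Sigma_r$ and $R = V_r^\top$. Because $k \ge r$, I pad these factors to $\tilde U = [\,U \;\; 0\,] \in \mathbb{R}^{m\times k}$ and $\tilde R = \begin{bmatrix} R \\ 0\end{bmatrix} \in \mathbb{R}^{k\times D}$, so that $\tilde U\tilde R = \Theta^\star$.

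I then choose the final encoder layer $W^\star = B\tilde R$. Orthonormality $B^\top B = I_k$ gives $h(s) = B^\top W^\star\phi(s) = \tilde R\phi(s)$. Since $H$ can realize a linear layer, I pick $\theta^\star$ so that $H(h;\theta^\star) = \tilde U h$. This yields $H(B^\top z(s);\theta^\star) = \tilde U\tilde R\phi(s) = \Theta^\star\phi(s) = V^\star(s)$ for every $s$. The only properties used are $B^\top B = I_k$, which makes $B$ right-invertible as a map onto $\mathbb{R}^k$, and $k\ge r$, which lets a rank-$r$ factor fit through $k$ coordinates.

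For trainability, I will use the chain rule. The loss depends on $W$ only through $A = B^\top W$, since $\mathcal{L}(\theta,W) = \tilde{\mathcal{L}}(\theta, B^\top W)$, where $\tilde{\mathcal{L}}(\theta,C)$ is the loss of the direct parameterization $h = C\phi(s)$. The chain rule then gives $\nabla_W \mathcal{L} = B\,\nabla_C\tilde{\mathcal{L}}(\theta, A)$. A gradient step $W_{t+1} = W_t - \eta B\,\nabla_C\tilde{\mathcal{L}}(\theta_t,A_t)$ induces
\[
A_{t+1} = B^\top W_{t+1} = A_t - \eta\, B^\top B\,\nabla_C\tilde{\mathcal{L}}(\theta_t,A_t) = A_t - \eta\,\nabla_C\tilde{\mathcal{L}}(\theta_t,A_t).
\]
Again $B^\top B = I_k$ is what removes the preconditioner. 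The head gradients also agree, because $\nabla_\theta\mathcal{L}(\theta,W) = \nabla_\theta\tilde{\mathcal{L}}(\theta, B^\top W)$. I then run an induction on $t$ with hypothesis $(\theta_t, A_t) = (\theta'_t, C_t)$. The base case holds by $C_0 = A_0$ and shared head initialization, and the displayed update shows the pairs follow the same recursion. The same computation applies verbatim to gradient flow.

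The main obstacle is being precise about what "identically" means, since $W_t$ itself is not determined by $A_t$. The component $(I - BB^\top)W_t$ receives zero gradient and stays frozen, but it never affects the loss, so the claim is correctly stated only about $A_t$ and $\theta$. I would make this explicit and note that the statement concerns plain gradient descent, not adaptive optimizers like Adam, where coordinatewise rescaling in $W$-space breaks the equivalence. If the encoder's upstream layers producing $\phi$ are also trained, I would either treat $\phi$ as fixed or observe that their gradients likewise depend only on $A_t$ and $\theta_t$. In the latter case the induction extends to include those parameters.
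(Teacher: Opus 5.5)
Your proposal is correct and follows the paper's proof essentially step for step: the same SVD-based rank-$r$ factorization padded with zeros and the choice $W^\star = B\tilde R$ (the paper splits the singular values symmetrically as $\Sigma_r^{1/2}\Sigma_r^{1/2}$, which is immaterial), and the same chain-rule identity $\nabla_W\mathcal{L} = B\,\nabla_A\widetilde{\mathcal{L}}$ with $B^\top B = I_k$ cancelling the preconditioner, followed by induction on $(\theta_t, A_t)$. Your extra remarks are correct refinements the paper omits: the $(I-BB^\top)W_t$ component is frozen and never affects the loss, the equivalence breaks under Adam, and it extends to trained upstream layers. One small wording fix: it is $B^\top$, not $B$, that is right-invertible, with right inverse $B$.
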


\begin{proof}
For general notation, fix a feature map $\phi:\mathcal{S}\to\mathbb{R}^{D}$ as in Assumption~\ref{ass:linear}.
We focus on the last linear layer of the encoder
\begin{equation*}
        z(s) = W\,\phi(s)\in\mathbb{R}^{D},
    \qquad W\in\mathbb{R}^{D\times D}.
\end{equation*}

The fixed orthogonal bottleneck forms
\begin{equation*}
        h(s) = B^\top z(s) = B^\top W\,\phi(s)\in\mathbb{R}^{k}.
\end{equation*}
Define the composite feature-to-bottleneck map $A = B^\top W \in \mathbb{R}^{k\times D}$, so that throughout, $ h(s) = A\,\phi(s)$.
The network output is
\begin{equation*}
        \widehat V(s) \;=\; H\bigl(h(s);\theta\bigr) \;=\; H\bigl(A\phi(s);\theta\bigr).
\end{equation*}

\subsubsection*{1. Representational sufficiency}

We will explicitly construct parameters $(W^\star,\theta^\star)$ such that for all $s$,
\[
    H\bigl(B^\top W^\star \phi(s);\theta^\star\bigr) = \Theta^\star\phi(s) = V^\star(s).
\]
The key point is that $\Theta^\star$ has rank $r$ and we assume $k\ge r$.

Since $\Theta^\star\in\mathbb{R}^{m\times D}$ has rank $r$, it admits a singular value decomposition
\[
    \Theta^\star = U_r \Sigma_r V_r^\top,
\]
where
\begin{itemize}[leftmargin=*,itemsep=2pt]
    \item $U_r\in\mathbb{R}^{m\times r}$ has orthonormal columns ($U_r^\top U_r=I_r$),
    \item $\Sigma_r\in\mathbb{R}^{r\times r}$ is diagonal with strictly positive singular values,
    \item $V_r\in\mathbb{R}^{D\times r}$ has orthonormal columns ($V_r^\top V_r=I_r$).
\end{itemize}

We can further factor $\Theta^*$ through an $r$-dimensional bottleneck by $\Theta^* = L R,$
where
\[
    L = U_r\Sigma_r^{1/2}\in\mathbb{R}^{m\times r},
    \qquad
    R = \Sigma_r^{1/2}V_r^\top\in\mathbb{R}^{r\times D}.
\]

Then, indeed,
\[
    LR = U_r\Sigma_r^{1/2}\Sigma_r^{1/2}V_r^\top = U_r\Sigma_rV_r^\top = \Theta^\star.
\]

Since the bottleneck dimension $k$ satisfies $k \ge r$, we can embed this $r$-dimensional factorization into a $k$-dimensional one by padding with zeros.  Define
\[
    U^\star = \bigl[\,L \;\;\; 0_{m\times(k-r)}\,\bigr]\in\mathbb{R}^{m\times k},
    \qquad
    A^\star =
    \begin{bmatrix}
        R\\[2pt]
        0_{(k-r)\times D}
    \end{bmatrix}\in\mathbb{R}^{k\times D}.
\]
Then
\[
    U^\star A^\star
    = \bigl[\,L\;\;0\,\bigr]
      \begin{bmatrix}
        R\\
        0
      \end{bmatrix}
    = LR
    = \Theta^\star.
\]
Thus $\Theta^\star$ factors through a $k$-dimensional bottleneck.

We now must realize the composite map $A^\star=B^\top W^\star$ for some $W^\star\in\mathbb{R}^{D\times D}$.
Since $B\in\mathbb{R}^{D\times k}$ has orthonormal columns, i.e.\ $B^\top B=I_k$, a canonical choice is
\[
    W^\star = B A^\star \in \mathbb{R}^{D\times D}.
\]
Then
\[
    B^\top W^\star
    = B^\top(BA^\star)
    = (B^\top B)A^\star
    = I_k A^\star
    = A^\star.
\]

By assumption, the head $H(h;\theta)$ contains at least a linear layer. Concretely, this means there exists a subset of parameters inside $\theta$ that can implement an affine map
\[
    h \mapsto U h + b
\]
for arbitrary $U\in\mathbb{R}^{m\times k}$ and $b\in\mathbb{R}^m$, possibly followed and/or preceded by additional transformations.
We will choose $\theta^\star$ so that overall the head implements a pure linear map $h\mapsto U^\star h$.
This is always possible for common heads used in RL (e.g., an MLP head) by setting all subsequent layers to identity (or appropriate weights) and biases to zero; equivalently, one may take the output layer to be linear with weight $U^\star$ and zero bias, and set any intermediate layers to implement the identity on $\mathbb{R}^k$.

Thus we choose $\theta^\star$ such that
\[
    H(h;\theta^\star) = U^\star h
    \qquad\forall\,h\in\mathbb{R}^k.
\]

With these choices, for any $s\in\mathcal{S}$ we have
\[
    H\bigl(B^\top z(s);\theta^\star\bigr)
    = H\bigl(B^\top W^\star\phi(s);\theta^\star\bigr)
    = U^\star (B^\top W^\star)\phi(s)
    = U^\star A^\star\phi(s)
    = \Theta^\star\phi(s)
    = V^\star(s).
\]
This proves representational sufficiency.

\subsection*{2. Trainability}

We now prove that training $(\theta,W)$ with the orthogonal bottleneck induces the same gradient descent dynamics on the composite map $A_t=B^\top W_t$ as training a direct bottleneck map $C_t$ in the parameterization $h=C\phi(s)$, provided $C_0=A_0$.

Consider an arbitrary (differentiable) training objective $\mathcal{L}$ computed from the head output.
For example, $\mathcal{L}$ can be an empirical risk over a dataset or any differentiable surrogate used within an RL update (e.g., value regression loss, PPO value loss, etc.). The only property we use is that $\mathcal{L}$ depends on $W$ only through the bottleneck features $h(s)=B^\top W\phi(s)$ (with $B$ fixed).

Define the projected parameterization
\[
    \widehat V_{\text{proj}}(s;\theta,W)
    = H\bigl(B^\top W\phi(s);\theta\bigr)
    = H\bigl(A\phi(s);\theta\bigr),
    \qquad A=B^\top W,
\]
with a corresponding loss $\mathcal{L}_{\text{proj}}(\theta,W)$. Similarly, define the direct parameterization 
\[
    \widehat V_{\text{dir}}(s;\theta,C)
    = H\bigl(C\phi(s);\theta\bigr),
    \qquad C\in\mathbb{R}^{k\times D}.
\]
with a corresponding loss $\mathcal{L}_{\text{dir}}(\theta,C)$.

By construction, if we identify $C=A=B^\top W$, then the two models produce identical outputs for all $s$ and hence identical loss values with
\[
    \mathcal{L}_{\text{proj}}(\theta,W)
    = \mathcal{L}_{\text{dir}}(\theta, A)
    \quad\text{with }A=B^\top W.
\]
For convenience, define the induced loss
\[
    \widetilde{\mathcal{L}}(\theta,A)
    =
    \mathcal{L}_{\text{dir}}(\theta,A),
\]
so that
\[
    \mathcal{L}_{\text{proj}}(\theta,W) = \widetilde{\mathcal{L}}(\theta,B^\top W).
\]

Fix $\theta$ and consider $W\mapsto \mathcal{L}_{\text{proj}}(\theta,W)=\widetilde{\mathcal{L}}(\theta,B^\top W)$.
Let $dW\in\mathbb{R}^{D\times D}$ be an arbitrary perturbation. The corresponding perturbation of $A=B^\top W$ is
\[
    dA = B^\top dW.
\]
Using the chain rule in differential form,
\[
    d\mathcal{L}_{\text{proj}}
    = d\widetilde{\mathcal{L}}
    = \langle \nabla_A \widetilde{\mathcal{L}}(\theta,A),\, dA\rangle_F.
\]
Substituting $dA=B^\top dW$ results in
\begin{align*}
    d\mathcal{L}_{\text{proj}}
    &= \langle \nabla_A \widetilde{\mathcal{L}}(\theta,A),\, B^\top dW\rangle_F \\
    &= \mathrm{Tr}\!\left((\nabla_A \widetilde{\mathcal{L}})^\top B^\top dW\right) \\
    &= \mathrm{Tr}\!\left(B(\nabla_A \widetilde{\mathcal{L}})^\top dW\right)
     \;=\;
     \mathrm{Tr}\!\left((B\nabla_A \widetilde{\mathcal{L}})^\top dW\right) \\
    &= \langle B\nabla_A \widetilde{\mathcal{L}}(\theta,A),\, dW\rangle_F,
\end{align*}
where in the intermediate step we used the cyclicity of the trace.
Since this holds for all perturbations $dW$, the gradient is
\[
    \nabla_W \mathcal{L}_{\text{proj}}(\theta,W)
    = B\,\nabla_A \widetilde{\mathcal{L}}(\theta,A),
    \qquad A=B^\top W.
\]

Now, consider gradient descent updates on $(\theta,W)$ with step size $\eta>0$:
\[
    \theta_{t+1} = \theta_t - \eta\,\nabla_\theta \mathcal{L}_{\text{proj}}(\theta_t,W_t),
    \qquad
    W_{t+1} = W_t - \eta\,\nabla_W \mathcal{L}_{\text{proj}}(\theta_t,W_t).
\]
Define $A_t=B^\top W_t$. Then
\begin{align*}
    A_{t+1}
    &= B^\top W_{t+1}
     = B^\top\!\left(W_t - \eta\,\nabla_W \mathcal{L}_{\text{proj}}(\theta_t,W_t)\right) \\
    &= B^\top W_t - \eta\,B^\top \nabla_W \mathcal{L}_{\text{proj}}(\theta_t,W_t) \\
    &= A_t - \eta\,B^\top \left(B\,\nabla_A \widetilde{\mathcal{L}}(\theta_t,A_t)\right) \\
    &= A_t - \eta\,(B^\top B)\,\nabla_A \widetilde{\mathcal{L}}(\theta_t,A_t).
\end{align*}
By orthonormality, we have $B^\top B = I_k$, so we obtain
\[
    A_{t+1}
    = A_t - \eta\,\nabla_A \widetilde{\mathcal{L}}(\theta_t,A_t).
\]
But this is exactly the gradient descent update that would be obtained by directly training the parameter $C\in\mathbb{R}^{k\times D}$ in the direct parameterization with the same step size:
\[
    C_{t+1} = C_t - \eta\,\nabla_C \mathcal{L}_{\text{dir}}(\theta_t,C_t)
           = C_t - \eta\,\nabla_A \widetilde{\mathcal{L}}(\theta_t,C_t).
\]
Thus, if $C_0=A_0$, then by induction $C_t=A_t$ for all $t$.

Finally, we verify that the head-parameter updates match under the identification $C_t=A_t$.
Since the two losses satisfy
\[
    \mathcal{L}_{\text{proj}}(\theta,W) = \widetilde{\mathcal{L}}(\theta, B^\top W),
    \qquad
    \mathcal{L}_{\text{dir}}(\theta,C) = \widetilde{\mathcal{L}}(\theta, C),
\]
we have, for fixed $A=B^\top W$ and $C=A$,
\[
    \nabla_\theta \mathcal{L}_{\text{proj}}(\theta,W)
    = \nabla_\theta \widetilde{\mathcal{L}}(\theta,A)
    = \nabla_\theta \mathcal{L}_{\text{dir}}(\theta,A).
\]
Hence the $\theta$-iterates coincide as well when initialized identically.

We have shown that when $k\ge r$, there exist parameters realizing $V^\star$ exactly through the fixed orthogonal bottleneck, and that gradient descent on $(\theta,W)$ induces the same dynamics on the composite map $A_t=B^\top W_t$ (and on $\theta_t$) as direct training of $(\theta,C)$ in the parameterization $h=C\phi(s)$ with $C_0=A_0$. This completes the proof.

\end{proof}

\section{Encoder Width Sweep}
\label{supp:encoder_width}
To disentangle encoder capacity from representation dimensionality, we vary the width of the encoder's final layer on Humanoid while fixing the bottleneck dimension to $k=8$. \Cref{fig:encoder_width} reports the corresponding learning curves. Across a wide range of widths $D$, curves overlap closely, and final performance is similar, indicating weak sensitivity to encoder width once the bottleneck dimension is fixed. That said, very large widths (e.g., $D=1024$) can be slightly worse than moderate widths, suggesting diminishing returns, and occasional mild degradation from increasing encoder capacity beyond what is needed in this setting.

\begin{figure}[htp]
    \centering
    \includegraphics[width=0.55\linewidth]{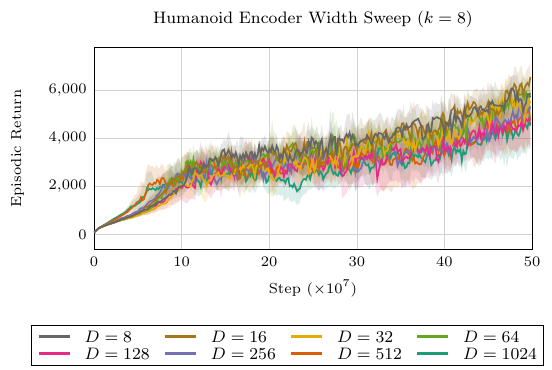}
    \caption{Humanoid encoder-width $D$ sweep with fixed bottleneck dimension $k=8$. Curves largely overlap across encoder widths, indicating that performance is only weakly sensitive to encoder width once $k$ is fixed; very large widths (e.g., $D=1024$) can exhibit slightly lower returns.}
    \label{fig:encoder_width}
\end{figure}

\section{Orthogonal Initialization Variants}
\label{supp:orth_init}
To verify that our results are not sensitive to the orthogonalization method used to initialize the fixed projection matrix $B$, we compare three standard procedures on Humanoid while fixing the bottleneck dimension to $k=18$, which is at or above the recovery threshold for this task.
We consider QR factorization, SVD (using the left singular vectors), and a polar-style normalization, in which we sample a Gaussian matrix $X$ and normalize it by the inverse square root of its Gram matrix, setting $B = X (X^\top X)^{-1/2}$, where $(X^\top X)^{-1/2}$ is computed via an eigen-decomposition with a small floor of $10^{-6}$ for numerical stability.

\Cref{fig:orth_init_humanoid} shows the corresponding learning curves. Across all three initializations, curves overlap closely, and final performance is similar, indicating that the findings are insensitive to the specific method used to construct an orthonormal basis for the bottleneck subspace in this setting.

\begin{figure}[hbtp]
    \centering
    \includegraphics[width=0.55\linewidth]{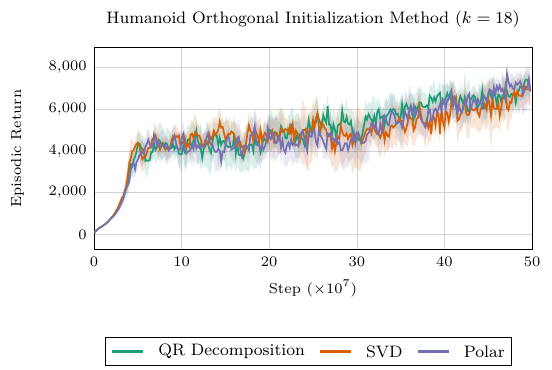}
    \caption{Humanoid learning curves with a fixed bottleneck dimension $k=18$, comparing three orthogonal initialization methods for the fixed projection matrix $B$: QR decomposition, SVD, and a polar-style normalization. Performance is similar across initialization methods, suggesting that results are not driven by the particular orthogonalization procedure.}
    \label{fig:orth_init_humanoid}
\end{figure}

\section{Full Learning Curves}
\label{supp:learning_curves}

This appendix provides full learning curves for Classic Control and MinAtar, complementing the main-text results that focus on bottleneck sweeps and aggregate performance. For Atari and Brax MuJoCo, plotting learning curves for every bottleneck dimension $k$ would be visually cluttered and redundant with the summary in \Cref{fig:atari_mujoco}. Instead, for each environment, we report representative learning curves for the unconstrained baseline, a bottleneck dimension below the recovery threshold, and a bottleneck dimension at or above the recovery threshold. This presentation makes the qualitative transition from failure to baseline-level learning explicit while keeping the figures readable. 

\Cref{fig:cc_curves,fig:minatar_curves} show full learning curves for Classic Control and MinAtar across several bottleneck dimensions. \Cref{fig:atari_curves,fig:mujoco_curves} show representative curves for Atari and Brax MuJoCo, comparing the no-projection baseline to a suboptimal and a recovering bottleneck dimension, with the corresponding $k$ values listed in \Cref{tab:k_map_curves}. Shaded regions indicate 95\% bootstrapped confidence intervals across the 10 seeds.

In addition to learning performance, for each learning-curve figure in this appendix, we include a matched figure that plots the normalized mean effective rank of the value representations over training for the same set of runs and bottleneck dimensions. This makes it possible to compare performance recovery and the evolution of representation rank side-by-side across benchmarks. \Cref{fig:cc_rank} provides the effective rank curves for Classic Control, \Cref{fig:minatar_rank} for MinAtar, \Cref{fig:atari_rank} for Atari, \Cref{fig:brax_rank} for Brax MuJoCo, and \Cref{fig:metaworld_lc_rank} shows a paired two-panel figure with the main-text learning curve and the corresponding effective rank curve for the same runs.

\begin{figure}
    \centering
    \includegraphics[width=0.75\linewidth]{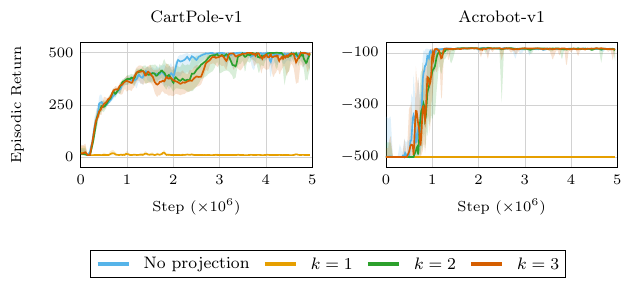}
    \caption{Classic Control learning curves for CartPole-v1 and Acrobot-v1 comparing the unconstrained DQN baseline to fixed orthogonal bottlenecks with varying dimension $k$. In both tasks, performance recovers once $k$ reaches a small threshold, and further increases in $k$ provide little additional benefit.}
\label{fig:cc_curves}

\end{figure}

\begin{figure}
    \centering
    \includegraphics[width=1.0\linewidth]{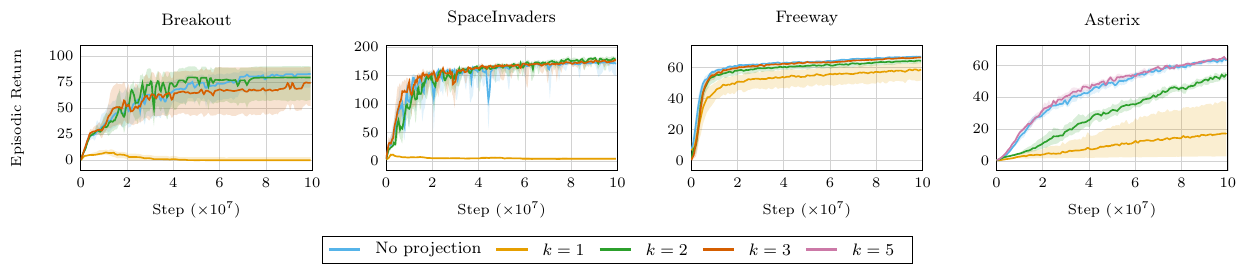}
    \caption{MinAtar learning curves comparing the unconstrained PPO baseline to fixed orthogonal bottlenecks across selected values of $k$. Similar to Classic Control, performance typically recovers once $k$ exceeds a small task-dependent threshold, consistent with the main-text bottleneck sweeps. For Breakout, SpaceInvaders, and Freeway, $k=1$, $k=2$, and $k=3$ are displayed. For Asterix, $k=1$, $k=3$, and $k=5$ are displayed.}
\label{fig:minatar_curves}

\end{figure}

\begin{figure}
    \centering
    \includegraphics[width=1.0\linewidth]{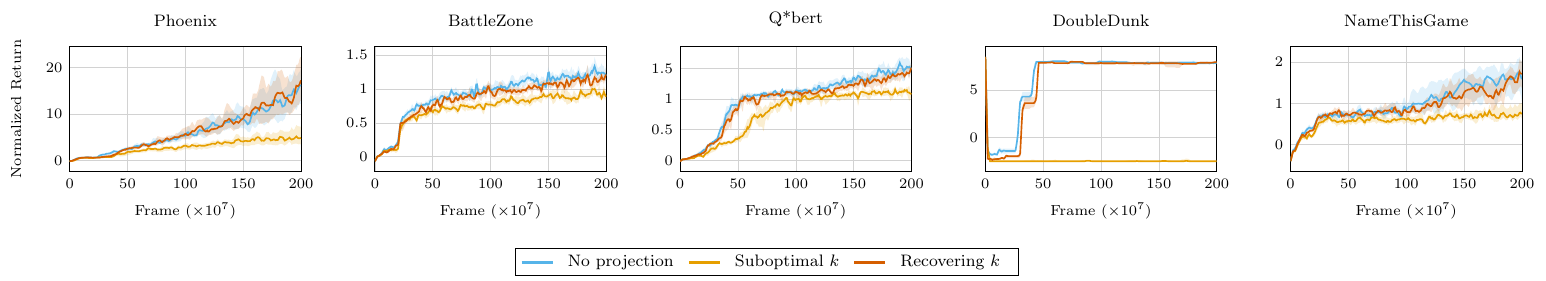}
    \caption{Representative Atari learning curves for each game, showing the unconstrained PQN baseline, a suboptimal bottleneck dimension (Suboptimal $k$), and a bottleneck dimension at/above the recovery threshold (Recovering $k$). Bottleneck dimensions differ by game; the specific $k$ values used are listed in \Cref{tab:k_map_curves}.}
    \label{fig:atari_curves}
\end{figure}

\begin{figure}
    \centering
    \includegraphics[width=1.0\linewidth]{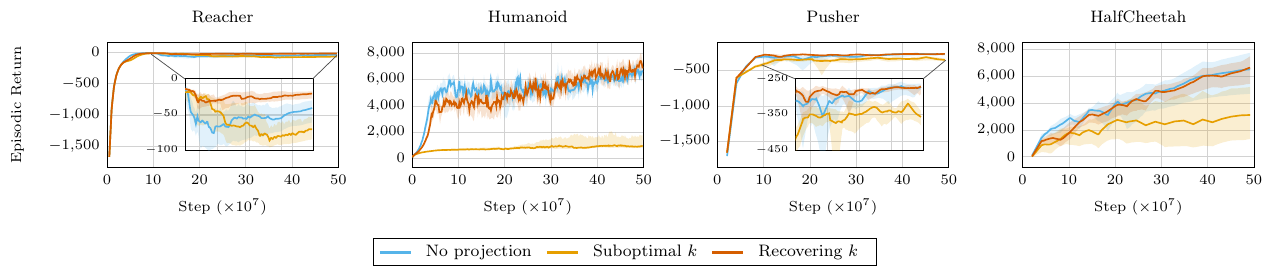}
    \caption{Representative Brax MuJoCo learning curves for each task, showing the unconstrained PPO baseline, a suboptimal bottleneck dimension (Suboptimal $k$), and a bottleneck dimension at/above the recovery threshold (Recovering $k$). Bottleneck dimensions differ by task; the specific $k$ values used are listed in \Cref{tab:k_map_curves}.}
    \label{fig:mujoco_curves}
\end{figure}

\begin{table}[t]
\centering
\small
\caption{Bottleneck dimensions used for the representative learning curves in \Cref{fig:atari_curves} and \Cref{fig:mujoco_curves}. For each environment, we plot the baseline with no projection, a suboptimal bottleneck dimension below the recovery threshold, and a recovering bottleneck dimension at/above the recovery threshold.}
\begin{tabular}{@{}llcc@{}}
\toprule
\textbf{Benchmark} & \textbf{Environment} & \textbf{Suboptimal $k$} & \textbf{Recovering $k$} \\
\midrule
\multirow{5}{*}{Atari-5} 
& Phoenix          & 8 & 128 \\
& Battle Zone      & 4 & 8 \\
& Q*bert           & 8 & 128 \\
& Double Dunk      & 1 & 2 \\
& Name This Game   & 8 & 64 \\
\midrule
\multirow{4}{*}{Brax MuJoCo}
& Reacher          & 2 & 6 \\
& Humanoid         & 6 & 18 \\
& Pusher           & 2 & 16 \\
& HalfCheetah      & 4 & 14 \\
\bottomrule
\end{tabular}
\label{tab:k_map_curves}
\end{table}

\begin{figure}
    \centering
    \includegraphics[width=0.75\linewidth]{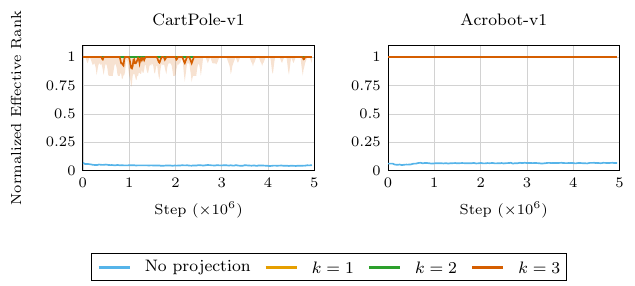}
    \caption{Classic Control normalized mean effective rank curves for CartPole-v1 and Acrobot-v1 for the same runs shown in \Cref{fig:cc_curves}.}
    \label{fig:cc_rank}
\end{figure}

\begin{figure}
    \centering
    \includegraphics[width=1.0\linewidth]{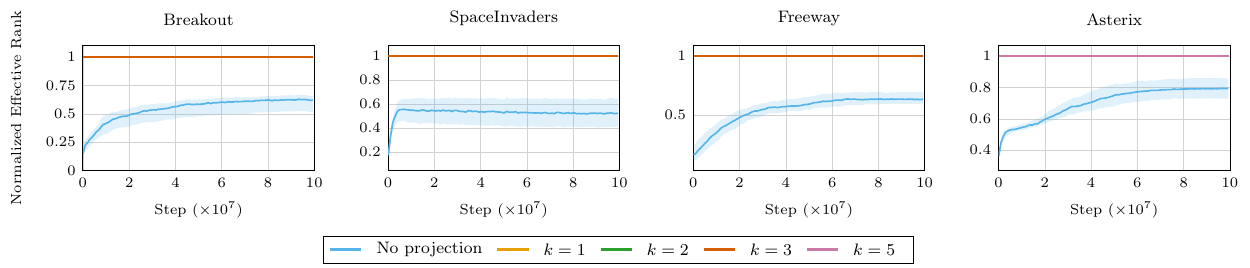}
    \caption{MinAtar normalized mean effective rank curves for the same runs shown in \Cref{fig:minatar_curves}.}
    \label{fig:minatar_rank}
\end{figure}

\begin{figure}
    \centering
    \includegraphics[width=1.0\linewidth]{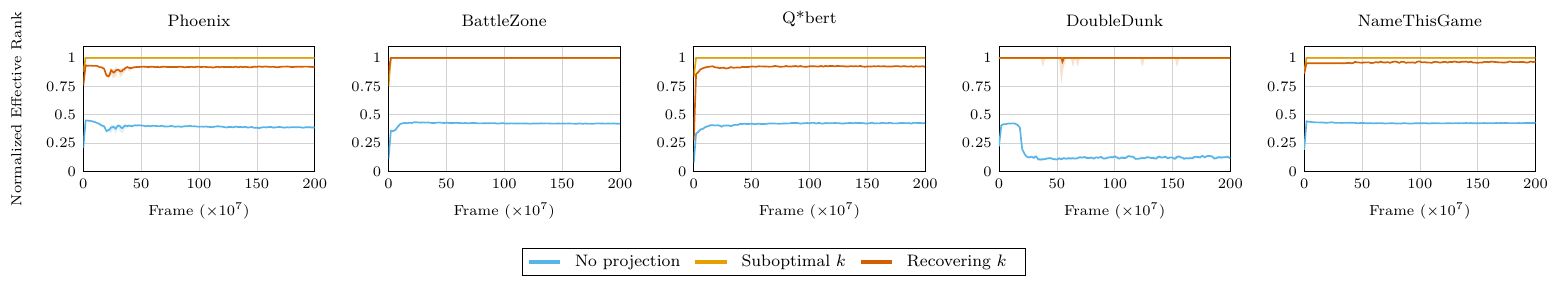}
    \caption{Atari normalized mean effective rank curves for the same runs shown in \Cref{fig:atari_curves}.}
    \label{fig:atari_rank}
\end{figure}

\begin{figure}
    \centering
    \includegraphics[width=1.0\linewidth]{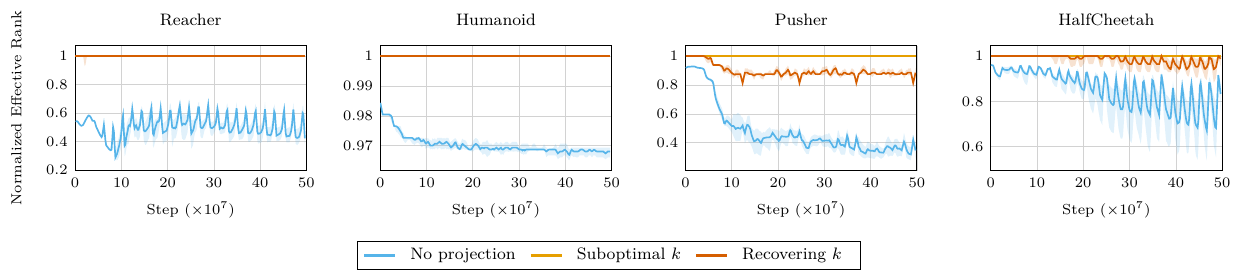}
    \caption{Brax MuJoCo normalized mean effective rank curves for the same runs shown in \Cref{fig:mujoco_curves}.}
    \label{fig:brax_rank}
\end{figure}

\begin{figure}
    \centering
    \begin{subfigure}[t]{0.49\linewidth}
        \centering
        \includegraphics[width=\linewidth]{figures/plot_metaworld.pdf}
        \caption{Learning curve.}
        \label{fig:metaworld_lc}
    \end{subfigure}
    \hfill
    \begin{subfigure}[t]{0.49\linewidth}
        \centering
        \includegraphics[width=\linewidth]{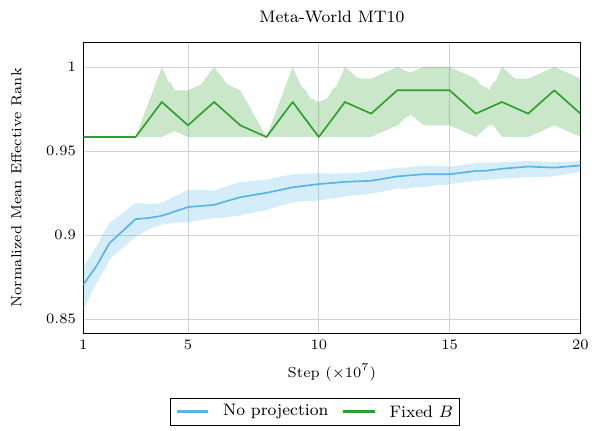}
        \caption{Normalized mean effective rank.}
        \label{fig:metaworld_rank}
    \end{subfigure}
    \caption{Meta-World MT10 performance and normalized mean effective rank for the same runs. \textit{(Left)}: learning curves for the PPO baseline and the fixed orthogonal bottleneck at $k=24$ (as in \Cref{fig:metaworld} in the main text). \textit{(Right)}: the corresponding normalized mean effective rank of the network representations over training.}
    \label{fig:metaworld_lc_rank}
\end{figure}

\end{document}